\begin{document}
\title{Strengthening neighbourhood substitution\thanks{The author received funding from the French “Investing for the Future – PIA3”
program under the Grant agreement ANR-19-PI3A-000, in the context of the ANITI project.}}
%
%
\author{Martin C. Cooper} 
\authorrunning{M.C. Cooper}
%
\institute{ANITI, IRIT, University of Toulouse III, Toulouse, France \\ \email{cooper@irit.fr}}
\maketitle              
\begin{abstract}
Domain reduction is an essential tool for solving the constraint satisfaction problem (CSP).
In the binary CSP, neighbourhood substitution consists in eliminating a value if there exists
another value which can be substituted for it in each constraint. We show that the notion of
neighbourhood substitution can be strengthened in two distinct ways without
increasing time complexity. 
We also show the theoretical result that, 
unlike neighbourhood substitution, finding an optimal sequence of these new operations is NP-hard.

\end{abstract}
\section{Introduction}

Domain reduction is classical in constraint satisfaction. Indeed, eliminating inconsistent values
by what is now known as arc consistency~\cite{Waltz} predates the first formulation of the
constraint satisfaction problem~\cite{DBLP:journals/ai/Mackworth77}. Maintaining arc consistency, which
consists in eliminating values that can be proved inconsistent by examining a single constraint
together with the current domains of the other variables,
is ubiquitous in constraint solvers~\cite{DBLP:reference/fai/Bessiere06}.
In binary CSPs, various algorithms have been proposed for enforcing arc consistency in
$O(ed^2)$ time, where $d$ denotes maximum domain size and $e$ the number of 
constraints~\cite{DBLP:journals/ai/MohrH86,ac}.
Generic constraints on a number of variables which is unbounded are known as global constraints. 
Arc consistency can be efficiently enforced for many types of global constraints~\cite{DBLP:reference/fai/HoeveK06}. 
This has led to the development of efficient solvers providing a rich modelling language.
Stronger notions of consistency have been proposed for domain reduction which lead to more eliminations but
at greater computational cost~\cite{DBLP:reference/fai/Bessiere06,DBLP:conf/ijcai/BessiereD05,DBLP:conf/cp/WoodwardKCB12}. 

In parallel, other research has explored 
methods that preserve satisfiability of
the CSP instance but do not preserve the set of solutions. When searching for a single solution, all
but one branch of the explored search tree leads to a dead-end, and so any method for faster
detection of unsatisfiability is clearly useful. 
An important example of such methods is the addition of symmetry-breaking 
constraints~\cite{DBLP:journals/constraints/ChuS15,DBLP:reference/fai/GentPP06}.
In this paper we concentrate on domain-reduction methods.
One family of satisfiability-preserving domain-reduction
operations is value merging. For example, two values can be merged if the so-called broken triangle (BT) 
pattern does not occur on these two values~\cite{DBLP:journals/ai/CooperDMETZ16}. Other value-merging
rules have been proposed which allow less merging than BT-merging but at a lower cost~\cite{vi}
or more merging at a greater cost~\cite{wBTP,DMTCS:Naanaa}.
Another family of satisfiability-preserving domain-reduction operations are based on the elimination
of values that are not essential to obtain a solution~\cite{DBLP:conf/gcai/FreuderW17}. The basic operation
in this family which corresponds most closely to arc consistency is neighbourhood substitution:
a value $b$ can be eliminated from a domain if there is another value $a$ in the same domain such that
$b$ can be replaced by $a$ in each tuple in each constraint relation (reduced to the current domains
of the other variables)~\cite{DBLP:conf/aaai/Freuder91}. In binary CSPs,  neighbourhood substitution
can be applied until convergence in $O(ed^3)$ time~\cite{ns}. In this paper, we study notions of
substitutability which are strictly stronger than neighbourhood substitutability but which can be applied in
the same $O(ed^3)$ time complexity. We say that one elimination rule $R_1$ is stronger than (subsumes) another 
rule $R_2$ if any value in a non-trivial instance (an instance with more than one variable) 
that can be eliminated by $R_2$ can also be eliminated by $R_1$, and is strictly 
stronger (strictly subsumes) if there is also at least one non-trivial instance 
in which $R_1$ can eliminate a value that $R_2$ cannot. 
Two rules are incomparable if neither is stronger than the other.

To illustrate the strength of the new notions of substitutability that we introduce in this paper,
consider the instances shown in Figure~\ref{fig:examples}.
These instances are all globally consistent (each variable-value assignment occurs in a solution)
and neighbourhood substitution is not powerful enough to eliminate any values.
In this paper, we introduce three novel value-elimination rules, defined in Section~\ref{sec:defs}:
SS, CNS and SCSS. We will show that 
snake substitution (SS) allows us to reduce all domains to singletons in the instance in Figure~\ref{fig:examples}(a).
Using the notation $\mathcal{D}(x_i)$ for the domain of the variable $x_i$,
conditioned neighbourhood-substitution (CNS), 
allows us to eliminate value 0 from $\mathcal{D}(x_2)$ and value 2 from $\mathcal{D}(x_3)$ 
in the instance shown in Figure~\ref{fig:examples}(b), reducing the constraint between $x_2$ and $x_3$ to
a null constraint (the complete relation $\mathcal{D}(x_2) \times \mathcal{D}(x_3)$). 
Snake-conditioned snake-substitution (SCSS) subsumes both SS and CNS and allows us to reduce
all domains to singletons in the instance in Figure~\ref{fig:examples}(c) (as well as in the instances
in Figure~\ref{fig:examples}(a),(b)).

\thicklines
\setlength{\unitlength}{0.65pt}
\begin{figure}
\centering
\begin{picture}(533,120)(0,0)

\put(2,0){
\begin{picture}(160,140)(0,20)
\put(40,40){\makebox(0,0){$\bullet$}}  \put(120,40){\makebox(0,0){$\bullet$}}  
\put(40,120){\makebox(0,0){$\bullet$}}  \put(120,120){\makebox(0,0){$\bullet$}}  
\put(40,40){\line(1,0){80}} \put(40,40){\line(0,1){80}} \put(40,120){\line(1,0){80}} \put(120,40){\line(0,1){80}}
\put(40,129){\makebox(0,0){$x_1$}}  \put(120,129){\makebox(0,0){$x_2$}}
\put(120,33){\makebox(0,0){$x_3$}}  \put(40,33){\makebox(0,0){$x_4$}} 
\put(80,127){\makebox(0,0){$x_1{=}x_2$}}  \put(80,33){\makebox(0,0){$x_3{=}x_4$}} 
\put(142,78){\makebox(0,0){$x_2{\vee}x_3$}} \put(18,78){\makebox(0,0){$x_1{\vee}x_4$}}  
 \put(4,130){\makebox(0,0){(a)}}
\end{picture}}

\put(190,0){
\begin{picture}(160,140)(0,0)
\put(20,20){\makebox(0,0){$\bullet$}}  \put(140,20){\makebox(0,0){$\bullet$}}  
\put(80,120){\makebox(0,0){$\bullet$}}  
\put(20,20){\line(1,0){120}} \put(20,20){\line(3,5){60}} \put(80,120){\line(3,-5){60}}
\put(20,11){\makebox(0,0){$x_1$}}  \put(80,129){\makebox(0,0){$x_2$}}
\put(140,11){\makebox(0,0){$x_3$}}  
\put(80,12){\makebox(0,0){$\neq$}}  \put(43,80){\makebox(0,0){$\neq$}}  \put(117,80){\makebox(0,0){$\geq$}}
 \put(14,110){\makebox(0,0){(b)}}
\end{picture}}

\put(340,0){\begin{picture}(200,160)(0,0)
\put(20,140){\makebox(0,0){$\bullet$}}  \put(100,20){\makebox(0,0){$\bullet$}}  
\put(100,100){\makebox(0,0){$\bullet$}}  \put(180,140){\makebox(0,0){$\bullet$}}
\put(20,140){\line(1,0){160}} \put(20,140){\line(2,-1){80}} \put(20,140){\line(2,-3){80}}
\put(100,20){\line(2,3){80}} \put(100,100){\line(2,1){80}}  \put(100,20){\line(0,1){80}} 
\put(100,110){\makebox(0,0){$x_1$}}  \put(20,149){\makebox(0,0){$x_2$}}
\put(180,149){\makebox(0,0){$x_3$}}  \put(99,11){\makebox(0,0){$x_4$}} 
\put(101,149){\makebox(0,0){$\leq$}}  \put(142,70){\makebox(0,0){$\leq$}}  \put(58,70){\makebox(0,0){$\geq$}}
\put(108,66){\makebox(0,0){$\neq$}}  \put(60,110){\makebox(0,0){$\neq$}}  \put(140,110){\makebox(0,0){$\neq$}}
 \put(14,110){\makebox(0,0){(c)}}
\end{picture}}

\end{picture}
\caption{(a) A 4-variable CSP instance over boolean domains; (b) a 3-variable CSP instance
over domains $\{0,1,2\}$ with constraints $x_1 \neq x_2$, $x_1 \neq x_3$ and $x_2 \geq x_3$;
(c) A 4-variable CSP instance over domain $\{0,1,2,3\}$ with constraints 
$x_1 \neq x_2$, $x_1 \neq x_3$, $x_1 \neq x_4$, $x_2 \leq x_3$, $x_2 \geq x_4$ and $x_4 \leq x_3$.} \label{fig:examples}
\end{figure}
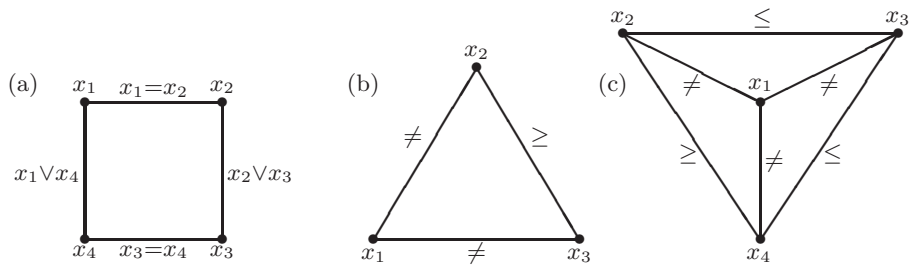
\setlength{\unitlength}{1pt}

In Section~\ref{sec:defs} we define the substitution operations SS, CNS and SCSS.
In Section~\ref{sec:proofs} we prove the validity of these three substitution operations, in the sense that 
they define satisfiability-preserving value-elimination rules. 
In Section~\ref{sec:examples} we explain in detail the examples in Figure~\ref{fig:examples}
and we give other examples from the semantic labelling of
line drawings. Section~\ref{sec:complexity} discusses the complexity 
of applying these value-elimination rules until convergence:
the time complexity 
of SS and CNS is no greater than neighbourhood substitution (NS) even though these rules are stictly stronger. However, 
unlike NS, finding an optimal sequence of value eliminations by SS or CNS is NP-hard: this is shown in Section~\ref{sec:nphard}.

\section{Definitions}  \label{sec:defs}

We study binary constraint satisfaction problems.

A \emph{binary CSP instance} $I=(X,\mathcal{D},R)$ comprises 
\begin{itemize}
\item  [$\bullet$] a set $X$ of $n$ variables $x_1,\ldots,x_n$,
\item  [$\bullet$] a domain $\mathcal{D}(x_i)$ for each variable $x_i$ ($i=1,\ldots,n$), and
\item  [$\bullet$] a binary constraint relation $R_{ij}$ for each pair of distinct variables $x_i,x_j$
($i,j \in \{1,\ldots,n\}$)
\end{itemize}
For notational convenience, we assume that there is exactly one binary relation $R_{ij}$ for each pair of variables.
Thus, if $x_i$ and $x_j$ do not constrain each other, then we consider that there is a \emph{trivial constraint}
between them with $R_{ij} = \mathcal{D}(x_i) \times \mathcal{D}(x_j)$.
Furthermore, $R_{ji}$ (viewed as a boolean matrix) is always the transpose of $R_{ij}$.
A \emph{solution} to $I$ is an $n$-tuple $s = \langle s_1,\ldots,s_n \rangle$ such that $\forall i \in \{1,\ldots,n\}$,
$s_i \in \mathcal{D}(x_i)$ and for each distinct $i,j \in \{1,\ldots,n\}$, $(s_i,s_j) \in R_{ij}$.

We say that $v_i \in D(x_i)$ has a \emph{support} at variable $x_j$ if $\exists v_j \in D(x_j)$
such that $(v_i,v_j) \in R_{ij}$. 
A binary CSP instance $I$ is \emph{arc consistent (AC)} if for all pairs of distinct variables $x_i,x_j$, 
each $v_i \in D(x_i)$ has a support at $x_j$~\cite{lecoutre}.

In the following we assume that we have a binary CSP instance $I=(X,\mathcal{D},R)$ over $n$ variables 
and, for clarity of presentation, we write $j \neq i$ as a shorthand for $j \in \{1,\ldots,n\} \setminus \{i\}$.
We use the notation $b \xrightarrow{\scriptstyle{ij}} a$ for 
\[\forall c \in \mathcal{D}(x_j), \ (b,c) \in R_{ij} \Rightarrow (a,c) \in R_{ij}
\]
(i.e. $a$ can be substituted for $b$ in any tuple $(b,c) \in R_{ij}$).

\begin{definition}\cite{DBLP:conf/aaai/Freuder91}  \ \label{def:NS}
Given two values $a,b \in \mathcal{D}(x_i)$, $b$ is \emph{neighbourhood substitutable (NS)} by $a$
if $\forall j \neq i$, $b \xrightarrow{\scriptstyle{ij}} a$.
\end{definition}

\thicklines
\setlength{\unitlength}{1.5pt}
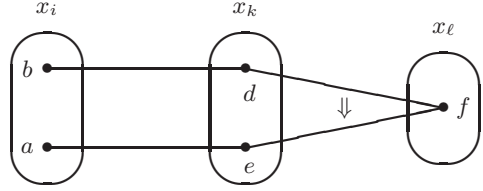
\begin{figure}
\centering
\begin{picture}(120,45)(0,0)

\put(10,10){\makebox(0,0){$\bullet$}}  \put(10,30){\makebox(0,0){$\bullet$}}  
\put(60,10){\makebox(0,0){$\bullet$}}  \put(60,30){\makebox(0,0){$\bullet$}}  
\put(110,20){\makebox(0,0){$\bullet$}}  \put(10,20){\oval(18,38)}  \put(60,20){\oval(18,38)}   \put(110,20){\oval(18,28)} 
\put(60,10){\line(5,1){50}}  \put(60,30){\line(5,-1){50}} 
\put(10,45){\makebox(0,0){$x_i$}}  \put(60,45){\makebox(0,0){$x_k$}}
\put(110,40){\makebox(0,0){$x_{\ell}$}}  \put(61,24){\makebox(0,0){$d$}} 
\put(61,5){\makebox(0,0){$e$}} \put(115,20){\makebox(0,0){$f$}}  \put(85,20){\makebox(0,0){$\Downarrow$}}
\put(5,10){\makebox(0,0){$a$}}  \put(5,30){\makebox(0,0){$b$}}  
\put(10,10){\line(5,0){50}} \put(10,30){\line(5,0){50}}  

\end{picture}
\caption{An illustration of the definition of $b \overset{ik}{\rightsquigarrow} a$.} \label{fig:snake}
\end{figure}
\setlength{\unitlength}{1pt}

It is well known and indeed fairly obvious that eliminating a neighbourhood substitutable value does not
change the satisfiability of a binary CSP instance. We will now define stronger notions of
substitutability. The proofs that these are indeed valid value-elimination rules 
are not directly obvious and hence are delayed until Section~\ref{sec:proofs}. 
We use the notation $b \overset{ik}{\rightsquigarrow} a$ for 
\[\forall d \in \mathcal{D}(x_k), \ (b,d) \in R_{ik} \ \Rightarrow \
\exists e \in \mathcal{D}(x_k) ( (a,e) \in R_{ik} \ \wedge \ \forall \ell \notin \{i,k\}, d \xrightarrow{\scriptstyle{k \ell}} e ).
\]
This is illustrated in Figure~\ref{fig:snake}, in which ovals represent domains, bullets represent values, a
line joining two values means that these two values are compatible (so, for example, $(a,e) \in R_{ik}$),
and the $\Downarrow$ means that $(d,f) \in R_{k \ell} \Rightarrow (e,f) \in R_{k \ell}$. 
Since $e$ in this definition is a function of $i,k,a$ and $d$, if necessary, we will write $e(i,k,a,d)$ instead of $e$.
In other words, the notation $b \overset{ik}{\rightsquigarrow} a$ means that $a$ can be substituted for $b$  in any tuple 
$(b,d) \in R_{ik}$  provided we also replace $d$ by $e(i,k,a,d)$. It is clear that $b \xrightarrow{\scriptstyle{ik}} a$
implies  $b \overset{ik}{\rightsquigarrow} a$ since it suffices to set $e(i,k,a,d) = d$ since, trivially,
$d \xrightarrow{\scriptstyle{k \ell}} d$ for all $\ell \notin \{i,k\}$.
In Figure~\ref{fig:examples}(a), the value $0 \in \mathcal{D}(x_1)$ is snake substitutable by $1$: we have  
$0 \overset{12}{\rightsquigarrow} 1$ by taking 
$e(1,2,1,0)=1$ (where the arguments of $e(i,k,a,d)$ are as shown in Figure~\ref{fig:snake}), since
$(1,1) \in R_{12}$ and $0 \xrightarrow{\scriptstyle{23}} 1$; and $0 \overset{14}{\rightsquigarrow} 1$ since
$0 \xrightarrow{\scriptstyle{14}} 1$.
Indeed, by a similar argument,
the value $0$ is snake substitutable by $1$ in each domain.

\begin{definition}  \label{def:SS}
Given two values $a,b \in \mathcal{D}(x_i)$, $b$ is \emph{snake substitutable (SS}) by $a$
if $\forall k \neq i$, $b \overset{ik}{\rightsquigarrow} a$.
\end{definition}

\thicklines
\setlength{\unitlength}{1.5pt}
\begin{figure}
\centering
\begin{picture}(120,40)(0,0)

\put(10,20){\makebox(0,0){$\bullet$}}  \put(60,10){\makebox(0,0){$\bullet$}}  \put(60,30){\makebox(0,0){$\bullet$}}  
\put(110,20){\makebox(0,0){$\bullet$}}  \put(10,20){\oval(18,28)}  \put(60,20){\oval(18,38)}   \put(110,20){\oval(18,28)} 
\put(10,20){\line(5,1){50}} \put(10,20){\line(5,-1){50}} \put(60,10){\line(5,1){50}}  \put(60,30){\line(5,-1){50}} 
\put(10,40){\makebox(0,0){$x_j$}}  \put(60,45){\makebox(0,0){$x_i$}}
\put(110,40){\makebox(0,0){$x_k$}}  \put(5,20){\makebox(0,0){$c$}} 
\put(59,5){\makebox(0,0){$b$}}  \put(59,25){\makebox(0,0){$a$}} \put(115,20){\makebox(0,0){$d$}} 
 \put(85,20){\makebox(0,0){$\Uparrow$}} 

\end{picture}
\caption{An illustration of the definition of conditioned neighbourhood-substitutability of $b$ by $a$
(conditioned by $x_j$).} \label{fig:CNS}
\end{figure}
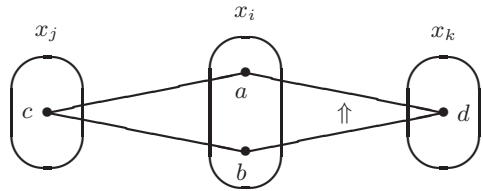
\setlength{\unitlength}{1pt}

In the following two definitions, $b$ can be eliminated from $\mathcal{D}(x_i)$ because it can be substituted by some other
value in $\mathcal{D}(x_i)$, but this value is a function of the value assigned to another variable $x_j$.
Definition~\ref{def:CNS} is illustrated in Figure~\ref{fig:CNS}.

\begin{definition}  \label{def:CNS}
Given  $b \in \mathcal{D}(x_i)$, $b$ is \emph{conditioned neighbourhood-substitutable (CNS)} if
for some $j \neq i$, $\forall c \in \mathcal{D}(x_j)$ with $(b,c) \in R_{ij}$, $\exists a \in \mathcal{D}(x_i) \setminus \{b\}$
such that $((a,c) \in R_{ij} \ \wedge \ \forall k \notin \{i,j\},  b \xrightarrow{\scriptstyle{ik}} a)$.
\end{definition}

A CNS value $b \in \mathcal{D}(x_i)$ is substitutable by a value $a \in \mathcal{D}(x_i)$ where $a$ is a
function of the value $c$ assigned to some other variable $x_j$. In Figure~\ref{fig:examples}(b), the value 
$0 \in \mathcal{D}(x_2)$ is conditioned neighbourhood-substitutable (CNS) with $x_1$ as the conditioning variable
(i.e. $j=1$ in Definition~\ref{def:CNS}): for the
assignments of $0$ or $1$ to $x_1$, we can take $a=2$ since $0 \xrightarrow{\scriptstyle{23}} 2$,
and for the assignment $2$ to $x_1$, we can take $a=1$ since $0 \xrightarrow{\scriptstyle{23}} 1$.
By a symmetrical argument, the value $2 \in \mathcal{D}(x_3)$ is CNS, again with $x_1$ as the conditioning variable.
We can note that in the resulting CSP instance, after eliminating $0$ from $\mathcal{D}(x_2)$
and $2$ from $\mathcal{D}(x_3)$, all domains can be reduced to singletons by applying snake substitutability.

Observe that CNS subsumes arc consistency;
if a value $b \in \mathcal{D}(x_i)$ has no support $c$ in $\mathcal{D}(x_j)$, then $b$ is trivially CNS
(conditioned by the variable $x_j$). It is easy to see from their definitions that SS and CNS both subsume NS
(in instances with more than one variable), but that neither NS nor SS subsume arc consistency.

We now integrate the notion of snake substitutability in two ways in the definition of CNS:
the value $d$ (see Figure~\ref{fig:CNS}) assigned to a variable $k \notin \{i,j\}$ 
may be replaced by a value $e$ (as in the definition of $b \overset{ik}{\rightsquigarrow} a$, above), 
but the value $c$ (see Figure~\ref{fig:CNS}) 
assigned to the conditioning variable $x_j$ may also be replaced by a value $g$. This is illustrated in Figure~\ref{fig:SCSS}.

\thicklines
\setlength{\unitlength}{1.5pt}
\begin{figure}
\centering
\begin{picture}(220,50)(0,0)
 
\put(10,20){\makebox(0,0){$\bullet$}}    
\put(60,10){\makebox(0,0){$\bullet$}}  \put(60,30){\makebox(0,0){$\bullet$}}  
\put(110,10){\makebox(0,0){$\bullet$}} \put(110,30){\makebox(0,0){$\bullet$}} 
\put(160,10){\makebox(0,0){$\bullet$}} \put(160,30){\makebox(0,0){$\bullet$}} 
\put(210,20){\makebox(0,0){$\bullet$}} 
\put(10,20){\oval(18,28)}  \put(60,20){\oval(18,38)}  \put(110,20){\oval(18,38)} \put(160,20){\oval(18,38)}  
\put(210,20){\oval(18,28)} 
\put(10,20){\line(5,1){50}} \put(10,20){\line(5,-1){50}} 
\put(160,10){\line(5,1){50}}  \put(160,30){\line(5,-1){50}} 
\put(10,40){\makebox(0,0){$x_m$}}  \put(60,45){\makebox(0,0){$x_j$}}
\put(110,45){\makebox(0,0){$x_i$}}  \put(160,45){\makebox(0,0){$x_k$}} \put(210,45){\makebox(0,0){$x_{\ell}$}} 
\put(5,20){\makebox(0,0){$h$}}  \put(61,25){\makebox(0,0){$c$}} 
\put(62,5){\makebox(0,0){$g$}}   
\put(161,25){\makebox(0,0){$d$}} \put(161,5){\makebox(0,0){$e$}}  \put(215,20){\makebox(0,0){$f$}} 
 \put(180,20){\makebox(0,0){$\Downarrow$}}  \put(40,20){\makebox(0,0){$\Downarrow$}} 
\put(60,10){\line(5,0){50}}  \put(60,30){\line(5,0){50}} 
\put(110,10){\line(5,0){50}}  \put(110,30){\line(5,0){50}} 
\put(109,5){\makebox(0,0){$a$}} \put(108,25){\makebox(0,0){$b$}}

\end{picture}
\caption{An illustration of 
snake-conditioned snake-substitutability of $b$ by $a$.} \label{fig:SCSS}
\end{figure}
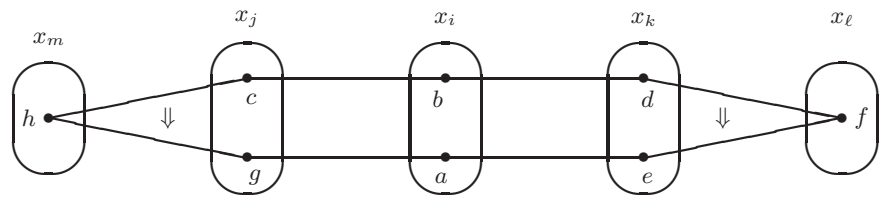
\setlength{\unitlength}{1pt}

\begin{definition}  \label{def:SCSS}
A value $b \in \mathcal{D}(x_i)$ is \emph{snake-conditioned snake-substitutable \linebreak (SCSS)} if
for some $j \neq i$, $\forall c \in \mathcal{D}(x_j)$ with $(b,c) \in R_{ij}$, $\exists a \in \mathcal{D}(x_i) \setminus \{b\}$
such that $( \forall k \notin \{i,j\}, b \overset{ik}{\rightsquigarrow} a \ \wedge \ 
(\exists g \in \mathcal{D}(x_j) ( (a,g) \in R_{ij} \ \wedge \ \forall m \notin \{i,j\}, c \xrightarrow{\scriptstyle{jm}} g)) )$.
\end{definition}

In Figure~\ref{fig:examples}(c), the value $3 \in \mathcal{D}(x_1)$ is snake-conditioned snake-substitutable (SCSS)
with $x_2$ as the conditioning variable:
for the assignment of $0$ or $2$ to $x_2$, we can take $a=1$ since $3 \overset{13}{\rightsquigarrow} 1$
(taking $e(1,3,1,d)=3$ for $d=0,1,2$) and $3 \overset{14}{\rightsquigarrow} 1$
(taking $e(1,4,1,d)=0$ for $d=0,1,2$), and for the assignment of $1$ to $x_2$, 
we can take $a=2$ since $3 \overset{13}{\rightsquigarrow} 2$
(again taking $e(1,3,2,d)=3$ for $d=0,1,2$) and $3 \overset{14}{\rightsquigarrow} 2$
(again taking $e(1,4,2,d)=0$ for $d=0,1,2$). 
By similar arguments, all domains can be reduced to singletons following the SCSS elimination 
of values in the following order: $0$ from $\mathcal{D}(x_1)$, $0$, $1$ and $2$ from $\mathcal{D}(x_3)$, 
$0$, $1$ and $2$ from $\mathcal{D}(x_2)$, $1$, $2$ and $3$ from $\mathcal{D}(x_4)$ and $2$ from $\mathcal{D}(x_1)$.

We can see that SCSS subsumes CNS by setting $g=c$ in Definition~\ref{def:SCSS} and by recalling that 
$b \xrightarrow{\scriptstyle{ik}} a$ implies that $b \overset{ik}{\rightsquigarrow} a$. It is a bit more subtle to
see that SCSS subsumes SS: if $b$ is snake substitutable by some value $a$, it suffices to choose $a$ in Definition~\ref{def:SCSS} 
to be this value (which is thus constant, i.e. not dependent on the value of $c$), then  the snake substitutability of $b$ by $a$
implies that $b \overset{ik}{\rightsquigarrow} a$ for all $k \neq i,j$ 
and $b \overset{ij}{\rightsquigarrow} a$, which in turn implies that 
$(a,g) \in R_{ij} \ \wedge \ \forall m \notin \{i,j\}, c \xrightarrow{\scriptstyle{jm}} g$ for
 $g = e(i,j,a,c)$; thus $b$ is snake-conditioned snake-substitutable.

\section{Value elimination}  \label{sec:proofs} 

It is well-known that NS is a valid value-elimination property, in the sense that if $b \in \mathcal{D}(x_i)$
is neighbourhood substitutable by $a$ then $b$ can be eliminated from $\mathcal{D}(x_i)$ without
changing the satisfiability of the CSP instance~\cite{DBLP:conf/aaai/Freuder91}.
In this section we show that SCSS is a valid value-elimination property. Since SS and CNS are subsumed
by SCSS, it follows immediately that SS and CNS are also valid value-elimination properties.

\begin{theorem}
In a binary CSP instance $I$, if $b \in \mathcal{D}(x_i)$ is snake-conditioned snake-substitutable then $b$ can be 
eliminated from $\mathcal{D}(x_i)$ without changing the satisfiability of the instance.
\end{theorem}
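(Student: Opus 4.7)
The plan is to show that any solution $s$ with $s_i = b$ can be transformed into a solution $s'$ with $s'_i \neq b$; from this, eliminating $b$ from $\mathcal{D}(x_i)$ preserves satisfiability. Let $x_j$ be the conditioning variable witnessing that $b$ is SCSS, and write $c = s_j$. Since $(b,c) \in R_{ij}$, Definition~\ref{def:SCSS} supplies some $a \in \mathcal{D}(x_i) \setminus \{b\}$ and some $g \in \mathcal{D}(x_j)$ with $(a,g) \in R_{ij}$ such that $c \xrightarrow{\scriptstyle{jm}} g$ for all $m \notin \{i,j\}$ and $b \overset{ik}{\rightsquigarrow} a$ for all $k \notin \{i,j\}$. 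For every $k \notin \{i,j\}$, write $d_k = s_k$; since $(b, d_k) \in R_{ik}$, the snake-substitutability relation $b \overset{ik}{\rightsquigarrow} a$ produces a value $e_k := e(i,k,a,d_k) \in \mathcal{D}(x_k)$ with $(a, e_k) \in R_{ik}$ and $d_k \xrightarrow{\scriptstyle{k\ell}} e_k$ for every $\ell \notin \{i,k\}$. Then I would set $s'_i = a$, $s'_j = g$, and $s'_k = e_k$ for $k \notin \{i,j\}$, and verify that $s'$ is a solution.

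Verifying $s'$ requires checking $(s'_p, s'_q) \in R_{pq}$ for every pair of distinct indices, which I would split into four cases. The pair $(i,j)$ is handled directly by the choice of $g$. The pair $(i,k)$ for $k \notin \{i,j\}$ follows immediately from the snake-substitutability clause guaranteeing $(a, e_k) \in R_{ik}$. For the pair $(j,k)$ with $k \notin \{i,j\}$, I start from $(c, d_k) \in R_{jk}$, apply $c \xrightarrow{\scriptstyle{jk}} g$ to obtain $(g, d_k) \in R_{jk}$, and then apply $d_k \xrightarrow{\scriptstyle{kj}} e_k$ (which is the $\ell = j$ instance of the snake clause, legal because $j \notin \{i,k\}$) to obtain $(g, e_k) \in R_{jk}$. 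Finally, for a pair $(k,\ell)$ with distinct $k,\ell \notin \{i,j\}$, I start from $(d_k, d_\ell) \in R_{k\ell}$, apply $d_k \xrightarrow{\scriptstyle{k\ell}} e_k$ to obtain $(e_k, d_\ell) \in R_{k\ell}$, and then apply $d_\ell \xrightarrow{\scriptstyle{\ell k}} e_\ell$ to obtain $(e_k, e_\ell) \in R_{k\ell}$.

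The main obstacle, and the reason the definition is delicately arranged, is the pair $(j,k)$: it mixes the conditioning substitution on $x_j$ (which only gives a weak ``neighbourhood'' implication $c \xrightarrow{\scriptstyle{jm}} g$) with the snake substitution on $x_k$. The proof goes through precisely because the snake relation $b \overset{ik}{\rightsquigarrow} a$ is required to hold for all $\ell \notin \{i,k\}$, \emph{including} $\ell = j$; had the definition only required $\ell \notin \{i,j,k\}$ this case would fail. The pair $(k,\ell)$ of two ``outer'' variables is routine once one notices the same two-step rewriting works, and here the asymmetric way the snake definition lets us change only one endpoint at a time is compatible since $d_k \xrightarrow{\scriptstyle{k\ell}} e_k$ rewrites the $k$-coordinate while $d_\ell \xrightarrow{\scriptstyle{\ell k}} e_\ell$ rewrites the $\ell$-coordinate of the already rewritten tuple.
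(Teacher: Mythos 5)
Your proof is correct and follows the same core strategy as the paper's: take a solution $s$ with $s_i=b$, condition on $c=s_j$ to obtain $a$ and $g$, extract $e_k$ from $b \overset{ik}{\rightsquigarrow} a$ at $d_k=s_k$, and verify the new tuple pair by pair using exactly the two-step rewriting arguments (first move the $j$- or $k$-coordinate, then the other) that the paper uses in its cases (2) and (3). The one place you diverge is the construction of the new tuple: you replace \emph{every} coordinate ($s'_j=g$ and $s'_k=e_k$ unconditionally), whereas the paper keeps $t_r=s_r$ whenever $(a,s_r)\in R_{ir}$ and only falls back to $g$ or $e(r)$ otherwise. Your uniform replacement is legitimate — all four compatibility checks go through because $c \xrightarrow{\scriptstyle{jm}} g$ and $d_k \xrightarrow{\scriptstyle{k\ell}} e_k$ hold for \emph{all} admissible $m$ and $\ell$, not just those where a repair is needed — and it buys a cleaner case analysis (four cases with no subcases, versus the paper's nested subcases (a)--(d)). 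Your closing observation about why $\ell=j$ must be permitted in the snake clause is exactly the right diagnosis of where a weaker definition would break.
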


\begin{proof}
By Definition~\ref{def:SCSS}, for some $j \neq i$, $\forall c \in \mathcal{D}(x_j)$ with $(b,c) \in R_{ij}$, 
$\exists a \in \mathcal{D}(x_i) \setminus \{b\}$ such that 
\begin{eqnarray}
& \forall k \notin \{i,j\}, b \overset{ik}{\rightsquigarrow} a   \label{line1} \\ 
\wedge & \ \ 
\exists g \in \mathcal{D}(x_j) ( (a,g) \in R_{ij} \ \wedge \ \forall m \notin \{i,j\}, c \xrightarrow{\scriptstyle{jm}} g). \label{line2}
\end{eqnarray}
We will only apply this definition for fixed $i,j$, and for fixed values $a$ and $c$, so we can consider $g$ as a constant
(even though it is actually a function of $i,j,a,c$).
Let $s = \langle s_1,\ldots,s_n \rangle$ be a solution to $I$ with $s_i=b$. It suffices to show that there is another
solution $t = \langle t_1,\ldots,t_n \rangle$ with $t_i \neq b$. Consider $c=s_j$.
Since $s$ is a solution, we know that $(b,c) = (s_i,s_j) \in R_{ij}$. Thus, according to the above definition of SCSS, 
there is a value $a \in  \mathcal{D}(x_i)$ that can replace $b$ (conditioned by the assignment $x_j=c=s_j$)
in the sense that (\ref{line1}) and (\ref{line2}) are satisfied.
Now, for each $k \notin \{i,j\}$, $b \overset{ik}{\rightsquigarrow} a$, i.e. 
\[
\forall d \in \mathcal{D}(x_k), \ (b,d) \in R_{ik} \ \Rightarrow \
\exists e \in \mathcal{D}(x_k) ( (a,e) \in R_{ik} \ \wedge \ \forall \ell \notin \{i,k\}, d \xrightarrow{\scriptstyle{k \ell}} e ).
\]
Recall that $e$ is a function of $i,k,a$ and $d$. But we will only consider fixed $i,a$ and a unique
value of $d$ dependant on $k$, so we will write
$e(k)$ for brevity. Indeed, setting $d=s_k$ we can deduce from $(b,d)=(s_i,s_k) \in R_{ik}$
(since $s$ is a solution) that
\begin{equation}
\forall k \neq i,j, \ \ 
\exists e(k) \in \mathcal{D}(x_k) ( (a,e(k)) \in R_{ik} \ \wedge \ \forall \ell \notin \{i,k\}, s_k \xrightarrow{\scriptstyle{k \ell}} e(k) ).
\label{line3}
\end{equation}

Define the $n$-tuple $t$ as follows:
\begin{equation*}
t_r=
\begin{cases}
a & \text{if $r = i$}\\
s_r & \text{if $r \neq i \ \wedge \ (a,s_r) \in R_{ir}$}\\
g & \text{if $r = j \ \wedge \ (a,s_r) \notin R_{ir}$}\\
e(r) & \text{if $r \neq i,j \ \wedge \ (a,s_r) \notin R_{ir}$}
\end{cases}
\end{equation*}
Clearly $t_i \neq b$ and $t_r \in \mathcal{D}(x_r)$ for all $r \in \{1,\ldots,n\}$.
To prove that $t$ is a solution, it remains to show that all binary constraints are satisfied,
i.e. that $(t_k,t_r) \in R_{kr}$ for all distinct $k,r \in \{1,\ldots,n\}$. There are three cases: 
(1) $k=i$, $r \neq i$, (2) $k=j$, $r \neq i,j$, (3) $k,r \neq i,j$.
\begin{itemize}
\item[(1)] There are three subcases: (a) $r=j$ and $(a,s_j) \notin R_{ij}$, (b) $r \neq i$ and $(a,s_r) \in R_{ir}$,
(c) $r \neq i,j$ and $(a,s_r) \notin R_{ir}$. In case (a), $t_i=a$ and $t_j=g$, so from equation~\ref{line2},
we have $(t_i,t_r) = (a,g) \in R_{ij}$. In case (b), $t_i=a$ and $t_r = s_r$ and so, trivially, $(t_i,t_r) = (a,s_r) \in R_{ir}$.
In case (c), $t_i = a$ and $t_r = e(r)$, so from equation~\ref{line3}, we have $(t_i,t_r) = (a,e(r)) \in R_{ir}$.

\item[(2)] There are four subcases: (a) $(a,s_r) \in R_{ir}$ and $(a,s_j) \in R_{ij}$,
(b) $(a,s_r) \notin R_{ir}$ and $(a,s_j) \in R_{ij}$,
(c) $(a,s_r) \in R_{ir}$ and $(a,s_j) \notin R_{ij}$,
(d) $(a,s_r) \notin R_{ir}$ and $(a,s_j) \notin R_{ij}$.
In case (a), $t_j=s_j$ and $t_r=s_r$, so $(t_j,t_r) \in R_{jr}$ since $s$ is a solution.
In case (b), $t_j=s_j$ and $t_r = e(r)$; setting $k=r$, $\ell=j$ in equation~\ref{line3}, we have 
$(t_j,t_r) = (s_j,e(r)) \in R_{jr}$ since $(s_j,s_r) \in R_{jr}$. In case (c), $t_j=g$ and $t_r=s_r$;
setting $c=s_j$ and $m=r$ in equation~\ref{line2} we can deduce that $(t_j,t_r) = (g,s_r) \in R_{jr}$
since $(s_j,s_r) \in R_{jr}$. In case (d), $t_j=g$ and $t_r=e(r)$. By the same argument as in case 2(b),
we know that $(s_j,e(r)) \in R_{jr}$, and then setting $c=s_j$ and $m=r$ in equation~\ref{line2},
we can deduce that $(t_j,t_r) = (g,e(r)) \in R_{jr}$.

\item[(3)] There are three essentially distinct subcases: (a) $(a,s_r) \in R_{ir}$ and $(a,s_k) \in R_{ik}$,
(b) $(a,s_r) \notin R_{ir}$ and $(a,s_k) \in R_{ik}$,
(c) $(a,s_r) \notin R_{ir}$ and $(a,s_k) \notin R_{ik}$.
In cases (a) and (b) we can deduce $(t_k,t_r) \in R_{kr}$ by the same arguments as in cases 2(a) and 2(b), above.
In case (c), $t_k=e(k)$ and $t_r=e(k)$. Setting $\ell=r$ in equation~\ref{line3},
we have $s_k \xrightarrow{\scriptstyle{kr}} e(k)$ from which we can deduce that
$(e(k),s_r) \in R_{kr}$ since $(s_k,s_r) \in R_{kr}$. Reversing the roles of $k$ and $r$ 
in equation~\ref{line3} (which is possible since they are distinct and both different to $i$ and $j$), 
we also have that $s_r \xrightarrow{\scriptstyle{rk}} e(r)$. We can then deduce that 
$(t_k,t_r) = (e(k),e(r)) \in R_{kr}$ since we have just shown that $(e(k),s_r) \in R_{kr}$.
\end{itemize}
We have thus shown that any solution $s$ with $s_i=b$ can be transformed into another solution $t$ that does not assign
the value $b$ to $x_i$ and hence that the elimination of $b$ from $\mathcal{D}(x_i)$ preserves satisfiability.
\end{proof}

\begin{corollary}  \label{cor:SS+CNS}
In a binary CSP instance $I$, if $b \in \mathcal{D}(x_i)$ is snake-substitutable or conditioned neighbourhood
substitutable, then $b$ can be eliminated from $\mathcal{D}(x_i)$ without changing the satisfiability of the instance.
\end{corollary}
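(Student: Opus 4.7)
The plan is to derive the corollary directly from the theorem by invoking the subsumption relationships SCSS $\supseteq$ SS and SCSS $\supseteq$ CNS, both of which were already sketched in the paragraph immediately preceding Section~\ref{sec:proofs}. Since the theorem shows that any SCSS value can be eliminated while preserving satisfiability, it suffices to check that an SS or CNS value is automatically SCSS; then the elimination follows by applying the theorem.

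First I would dispatch the CNS case. Suppose $b \in \mathcal{D}(x_i)$ is CNS with conditioning variable $x_j$. Then for every $c \in \mathcal{D}(x_j)$ with $(b,c) \in R_{ij}$ there exists $a \in \mathcal{D}(x_i)\setminus\{b\}$ with $(a,c)\in R_{ij}$ and $b \xrightarrow{\scriptstyle{ik}} a$ for all $k \notin \{i,j\}$. Setting $g := c$, the witness for the existential in Definition~\ref{def:SCSS} holds trivially since $c \xrightarrow{\scriptstyle{jm}} c$ for every $m$. Moreover, as observed in Section~\ref{sec:defs}, $b \xrightarrow{\scriptstyle{ik}} a$ implies $b \overset{ik}{\rightsquigarrow} a$ (take $e(i,k,a,d)=d$). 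Hence $b$ is SCSS and the theorem applies.

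Next I would handle the SS case. Suppose $b$ is snake substitutable by some fixed $a$, i.e. $b \overset{ik}{\rightsquigarrow} a$ for every $k \neq i$. Pick an arbitrary $j \neq i$ as the conditioning variable and, for every $c \in \mathcal{D}(x_j)$ with $(b,c) \in R_{ij}$, use this same $a$ (it is independent of $c$). Then $b \overset{ik}{\rightsquigarrow} a$ for all $k \notin \{i,j\}$ is immediate. The remaining requirement of Definition~\ref{def:SCSS} is the existence of $g \in \mathcal{D}(x_j)$ with $(a,g) \in R_{ij}$ and $c \xrightarrow{\scriptstyle{jm}} g$ for all $m \notin \{i,j\}$; this is supplied by applying the instance $k=j$ of $b \overset{ij}{\rightsquigarrow} a$ to $d := c$ and taking $g := e(i,j,a,c)$. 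Thus $b$ is SCSS, and the theorem yields the desired conclusion.

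The only place where there is anything to watch is the SS case, since one must use the snake-substitutability relation specifically at $k=j$ to produce the conditioning witness $g$; the CNS case is essentially syntactic. There is no further obstacle: since both subsumption arguments reduce SS and CNS to SCSS, the corollary is an immediate consequence of the theorem and requires no new case analysis.
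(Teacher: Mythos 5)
Your proposal is correct and matches the paper's own argument: the paper derives the corollary immediately from the SCSS theorem via the two subsumption observations made at the end of Section~2, namely setting $g=c$ for CNS and taking $g=e(i,j,a,c)$ from the $k=j$ instance of $b \overset{ij}{\rightsquigarrow} a$ for SS. No differences worth noting.
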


\section{Examples}  \label{sec:examples}

We illustrate the potential of SS, CNS and SCSS using the examples given in Figure~\ref{fig:examples}.
In Figure~\ref{fig:examples}(a), the value $0 \in \mathcal{D}(x_1)$ is snake substitutable by $1$: we have  
$0 \overset{12}{\rightsquigarrow} 1$ by taking 
$e(1,2,1,0)=1$ (where the arguments of $e(i,k,a,d)$ are as shown in Figure~\ref{fig:snake}), since
$(1,1) \in R_{12}$ and $0 \xrightarrow{\scriptstyle{23}} 1$; and $0 \overset{14}{\rightsquigarrow} 1$ since
$0 \xrightarrow{\scriptstyle{14}} 1$.
Indeed, by a similar argument,
the value $0$ is snake substitutable by $1$ in each domain. In Figure~\ref{fig:examples}(b), the value 
$0 \in \mathcal{D}(x_2)$ is conditioned neighbourhood-substitutable (CNS) with $x_1$ as the conditioning variable
(i.e. $j=1$ in Definition~\ref{def:CNS}): for the
assignments of $0$ or $1$ to $x_1$, we can take $a=2$ since $0 \xrightarrow{\scriptstyle{23}} 2$,
and for the assignment $2$ to $x_1$, we can take $a=1$ since $0 \xrightarrow{\scriptstyle{23}} 1$.
By a symmetrical argument, the value $2 \in \mathcal{D}(x_3)$ is CNS, again with $x_1$ as the conditioning variable.
We can note that in the resulting CSP instance, after eliminating $0$ from $\mathcal{D}(x_2)$
and $2$ from $\mathcal{D}(x_3)$, all domains can be reduced to singletons by applying snake substitutability.

In Figure~\ref{fig:examples}(c), the value $3 \in \mathcal{D}(x_1)$ is snake-conditioned snake-substitutable (SCSS)
with $x_2$ as the conditioning variable:
for the assignment of $0$ or $2$ to $x_2$, we can take $a=1$ since $3 \overset{13}{\rightsquigarrow} 1$
(taking $e(1,3,1,d)=3$ for $d=0,1,2$) and $3 \overset{14}{\rightsquigarrow} 1$
(taking $e(1,4,1,d)=0$ for $d=0,1,2$), and for the assignment of $1$ to $x_2$, 
we can take $a=2$ since $3 \overset{13}{\rightsquigarrow} 2$
(again taking $e(1,3,2,d)=3$ for $d=0,1,2$) and $3 \overset{14}{\rightsquigarrow} 2$
(again taking $e(1,4,2,d)=0$ for $d=0,1,2$). 
By similar arguments, all domains can be reduced to singletons following the SCSS elimination 
of values in the following order: $0$ from $\mathcal{D}(x_1)$, $0$, $1$ and $2$ from $\mathcal{D}(x_3)$, 
$0$, $1$ and $2$ from $\mathcal{D}(x_2)$, $1$, $2$ and $3$ from $\mathcal{D}(x_4)$ and $2$ from $\mathcal{D}(x_1)$.

\thicklines \setlength{\unitlength}{1pt}
\newsavebox{\ru}
\savebox{\ru}(20,10){
\begin{picture}(20,10)(0,0)
\put(16,8){\line(-1,0){13}}  \put(16,8){\line(-5,-6){9}}
\end{picture}
}
\newsavebox{\ld}
\savebox{\ld}(20,10){
\begin{picture}(20,10)(0,0)
\put(4,2){\line(1,0){13}}  \put(4,2){\line(5,6){9}}
\end{picture}
}

\thicklines \setlength{\unitlength}{1.2pt}
\begin{figure}
\centering
\begin{picture}(280,180)(0,0)
\put(20,100){
\begin{picture}(40,40)(0,0)
\multiput(0,10)(20,-10){2}{\line(0,1){20}}
\multiput(0,10)(0,20){2}{\line(2,-1){20}}
\multiput(20,0)(0,20){2}{\line(2,1){20}} \put(0,30){\line(2,1){20}}
\put(40,10){\line(0,1){20}} \put(20,40){\line(2,-1){20}}
\put(13,16){\makebox(0,0){$A$}}
\end{picture}
}
\put(100,100){
\begin{picture}(80,50)(0,0)
\multiput(0,10)(20,-10){2}{\line(0,1){20}}
\multiput(0,10)(0,20){2}{\line(2,-1){20}}
\multiput(20,0)(0,20){2}{\line(2,1){20}} \put(0,30){\line(2,1){40}}
\put(40,0){\begin{picture}(40,40)(0,0)
\multiput(0,10)(20,-10){2}{\line(0,1){20}}
\multiput(0,10)(0,20){2}{\line(2,-1){20}}
\multiput(20,0)(0,20){2}{\line(2,1){20}}
\end{picture}}
\put(80,10){\line(0,1){20}} \put(40,50){\line(2,-1){40}}
\put(40,37){\makebox(0,0){$B$}}
\put(15,12){\makebox(0,0){$+$}} \put(35,17){\makebox(0,0){$-$}} \put(10,35){\usebox{\ru}} \put(24,2){\usebox{\ld}}
\end{picture}
}
\put(200,100){
\begin{picture}(65,80)(0,0)
\multiput(0,20)(0,20){2}{\line(2,-1){45}}
\multiput(0,20)(45,-22.5){2}{\line(0,1){20}}
\multiput(45,-2.5)(0,20){2}{\line(2,1){20}}
\put(65,7.5){\line(0,1){20}} \put(20,50){\line(2,-1){45}}
\put(0,40){\line(2,1){20}} \put(60,30){\line(0,1){40}}
\multiput(20,50)(20,-10){2}{\line(0,1){20}}
\multiput(20,70)(20,-10){2}{\line(2,1){20}}
\multiput(20,70)(20,10){2}{\line(2,-1){20}}
\put(36,34){\makebox(0,0){$C$}}
\end{picture}
}
\put(0,0){
\begin{picture}(80,70)(0,0)
\multiput(0,10)(20,-10){2}{\line(0,1){20}}
\multiput(0,10)(0,20){2}{\line(2,-1){20}}
\multiput(20,0)(0,20){2}{\line(2,1){20}} \put(0,30){\line(2,1){20}}
\put(20,30){\begin{picture}(40,40)(0,0)
\multiput(0,10)(20,-10){2}{\line(0,1){20}}
\multiput(0,10)(0,20){2}{\line(2,-1){20}}
\multiput(20,0)(0,20){2}{\line(2,1){20}} \put(0,30){\line(2,1){20}}
\put(40,10){\line(0,1){20}} \put(20,40){\line(2,-1){20}}
\end{picture}}
\put(40,0){\begin{picture}(40,40)(0,0)
\multiput(0,10)(20,-10){2}{\line(0,1){20}}
\multiput(0,10)(0,20){2}{\line(2,-1){20}}
\multiput(20,0)(0,20){2}{\line(2,1){20}} \put(40,10){\line(0,1){20}}
\put(20,40){\line(2,-1){20}}
\end{picture}}
\put(27.5,30){\makebox(0,0){$D$}}
\end{picture}
}
\put(100,0){
\begin{picture}(80,80)(0,0)
\multiput(0,20)(0,20){2}{\line(2,-1){40}}
\multiput(0,20)(40,-20){2}{\line(0,1){20}}
\multiput(40,0)(0,20){2}{\line(2,1){40}} \put(0,40){\line(2,1){20}}
\put(60,50){\line(2,-1){20}} \put(80,20){\line(0,1){20}}
\put(20,40){\begin{picture}(40,40)(0,0)
\multiput(0,10)(20,-10){2}{\line(0,1){20}}
\multiput(0,10)(0,20){2}{\line(2,-1){20}}
\multiput(20,0)(0,20){2}{\line(2,1){20}} \put(0,30){\line(2,1){20}}
\put(40,10){\line(0,1){20}} \put(20,40){\line(2,-1){20}}
\end{picture}}
\put(40,33){\makebox(0,0){$E$}}
\end{picture}
}
\put(200,0){
\begin{picture}(80,80)(0,0)
\multiput(0,20)(40,60){2}{\line(2,-1){40}}
\multiput(0,60)(40,-60){2}{\line(2,1){40}}
\multiput(0,20)(80,0){2}{\line(0,1){40}}
\multiput(0,60)(40,0){2}{\line(2,-1){20}}
\multiput(20,50)(40,0){2}{\line(2,1){20}}
\multiput(20,30)(20,10){2}{\line(0,1){20}}
\multiput(20,30)(20,10){2}{\line(2,-1){20}}
\multiput(20,30)(20,-10){2}{\line(2,1){20}}
\multiput(40,0)(20,30){2}{\line(0,1){20}}
\put(40,31.5){\makebox(0,0){$F$}}
\end{picture}
}
\end{picture} 
\caption{The six different types of trihedral vertices: $A$, $B$, $C$, $D$, $E$, $F$.} \label{fig:vertices}
\end{figure}
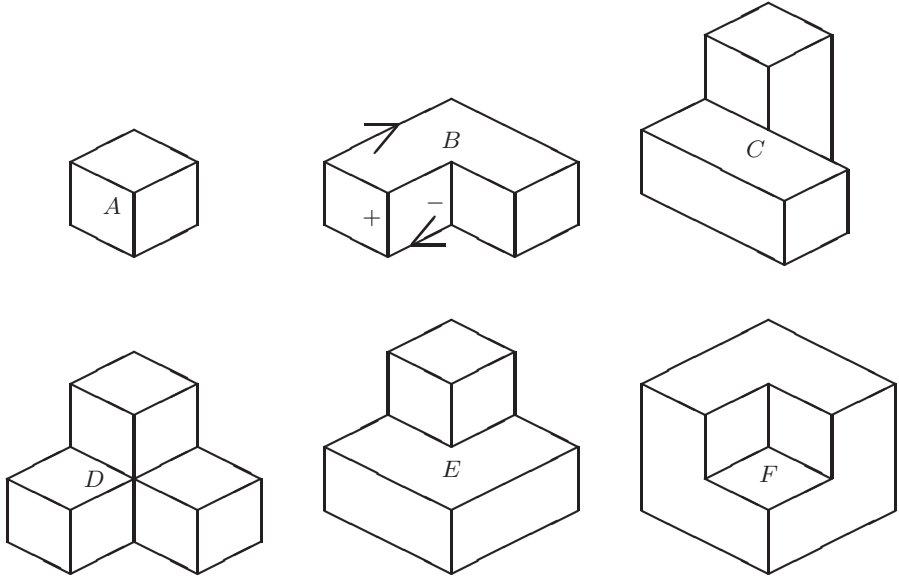

\thicklines
\setlength{\unitlength}{1pt}
\begin{figure}
\centering
\begin{picture}(340,370)(0,-35)
\thicklines
\put(5,250){ 
\begin{picture}(80,60)(0,0)
\put(40,20){\line(1,1){30}} \put(40,20){\line(-1,1){30}}
\put(60,40){\line(0,-1){10}} \put(60,40){\line(-1,0){10}}
\put(30,40){\makebox(0,0){$+$}}
\end{picture}
}
\put(85,250){ 
\begin{picture}(80,60)(0,0)
\put(40,20){\line(1,1){30}} \put(40,20){\line(-1,1){30}}
\put(30,30){\line(0,1){10}} \put(30,30){\line(-1,0){10}}
\put(50,40){\makebox(0,0){$+$}}
\end{picture}
}
\put(165,250){ 
\begin{picture}(80,60)(0,0)
\put(40,20){\line(1,1){30}} \put(40,20){\line(-1,1){30}}
\put(30,30){\line(0,1){10}} \put(30,30){\line(-1,0){10}}
\put(60,40){\line(0,-1){10}} \put(60,40){\line(-1,0){10}}
\end{picture}
}
\put(5,195){ 
\begin{picture}(240,60)(0,0)
\multiput(40,20)(80,0){3}{\line(1,1){30}}
\multiput(40,20)(80,0){3}{\line(-1,1){30}}
\multiput(50,30)(80,0){2}{\line(0,1){10}}
\multiput(50,30)(80,0){2}{\line(1,0){10}}
\multiput(20,40)(160,0){2}{\line(0,-1){10}}
\multiput(20,40)(160,0){2}{\line(1,0){10}}
\put(210,40){\makebox(0,0){$-$}} \put(110,40){\makebox(0,0){$-$}}
\end{picture}
}
\put(160,120){ 
\begin{picture}(80,80)(0,0)
\put(40,20){\line(0,1){30}} \put(40,50){\line(3,2){30}}
\put(40,50){\line(-3,2){30}} \put(50,35){\makebox(0,0){$+$}}
\put(60,55){\makebox(0,0){$+$}} \put(20,55){\makebox(0,0){$+$}}
\end{picture}
}
\put(0,120){ 
\begin{picture}(320,80)(0,0)
\multiput(40,20)(80,0){2}{\line(0,1){30}}
\multiput(40,50)(80,0){2}{\line(3,2){30}}
\multiput(40,50)(80,0){2}{\line(-3,2){30}}
\put(50,35){\makebox(0,0){$-$}}
\multiput(60,55)(80,0){2}{\makebox(0,0){$-$}}
\put(20,55){\makebox(0,0){$-$}} \put(120,40){\line(1,-1){10}}
\put(120,40){\line(-1,-1){10}} \put(105,60){\line(1,0){10}}
\put(105,60){\line(0,-1){10}}
\end{picture}
}
\put(0,55){ 
\begin{picture}(80,80)(0,0)
\multiput(40,20)(80,0){2}{\line(0,1){40}}
\multiput(40,20)(80,0){2}{\line(3,2){30}}
\multiput(40,20)(80,0){2}{\line(-3,2){30}}
\multiput(50,50)(80,0){2}{\makebox(0,0){$+$}}
\put(60,25){\makebox(0,0){$-$}} \put(20,25){\makebox(0,0){$-$}}
\put(105,30){\line(1,0){10}} \put(105,30){\line(0,-1){10}}
\put(135,30){\line(1,0){10}} \put(135,30){\line(0,1){10}}
\end{picture}
}
\put(160,55){ 
\begin{picture}(80,80)(0,0)
\put(40,20){\line(0,1){40}} \put(40,20){\line(3,2){30}}
\put(40,20){\line(-3,2){30}} \put(50,50){\makebox(0,0){$-$}}
\put(60,25){\makebox(0,0){$+$}} \put(20,25){\makebox(0,0){$+$}}
\end{picture}
}
\put(45,-25){ 
\begin{picture}(60,75)(-10,-17.5)
\put(-10,-2.5){\line(4,3){60}} \put(-10,42.5){\line(4,-3){60}}
\put(20,-17.5){\line(0,1){75}} \put(15,40){\makebox(0,0){$+$}}
\put(15,0){\makebox(0,0){$-$}} \put(-2,10){\makebox(0,0){$+$}}
\put(-2,30){\makebox(0,0){$-$}} \put(42,10){\makebox(0,0){$+$}}
\put(42,30){\makebox(0,0){$-$}}
\end{picture}
} \put(132.5,-25){
\begin{picture}(60,75)(-10,-17.5)
\put(20,20){\line(4,3){30}} \put(-10,42.5){\line(4,-3){30}}
\put(20,-17.5){\line(0,1){75}} \put(15,40){\makebox(0,0){$+$}}
\put(15,0){\makebox(0,0){$-$}} \put(40,35){\line(-1,0){10}}
\put(40,35){\line(-1,-4){2.5}} \put(4,32){\line(-1,0){10}}
\put(4,32){\line(-1,4){2.5}}
\end{picture}
}
\put(255,250){ 
\begin{picture}(80,80)(0,0)
\put(47,30){\makebox(0,0){?}} \put(40,20){\line(0,1){30}}
\put(10,50){\line(1,0){60}} \multiput(20,50)(30,0){2}{\line(5,3){10}}
\multiput(20,50)(30,0){2}{\line(5,-3){10}}
\end{picture}
}
\put(255,180){ 
\begin{picture}(80,80)(0,0)
\put(47,30){\makebox(0,0){$+$}} \put(40,20){\line(0,1){30}}
\put(10,50){\line(1,0){60}} \put(20,50){\line(5,3){10}}
\put(20,50){\line(5,-3){10}} \put(60,45){\makebox(0,0){$-$}}
\end{picture}
}
\put(255,110){ 
\begin{picture}(80,80)(0,0)
\put(47,30){\makebox(0,0){$+$}} \put(40,20){\line(0,1){30}}
\put(10,50){\line(1,0){60}} \put(50,50){\line(5,3){10}}
\put(50,50){\line(5,-3){10}} \put(20,45){\makebox(0,0){$-$}}
\end{picture}
}
\put(255,40){ 
\begin{picture}(80,80)(0,0)
\put(40,35){\line(3,-5){6}} \put(40,35){\line(-3,-5){6}}
\put(40,20){\line(0,1){30}} \put(10,50){\line(1,0){60}}
\put(20,50){\line(5,3){10}} \put(20,50){\line(5,-3){10}}
\put(60,45){\makebox(0,0){$-$}}
\end{picture}
}
\put(255,-30){ 
\begin{picture}(80,80)(0,0)
\put(40,30){\line(3,5){6}} \put(40,30){\line(-3,5){6}}
\put(40,20){\line(0,1){30}} \put(10,50){\line(1,0){60}}
\put(50,50){\line(5,3){10}} \put(50,50){\line(5,-3){10}}
\put(20,45){\makebox(0,0){$-$}}
\end{picture}
}
\end{picture}
\caption{The catalogue of labelled junctions that are projections of trihedral vertices.} \label{fig:catalogue}
\end{figure}
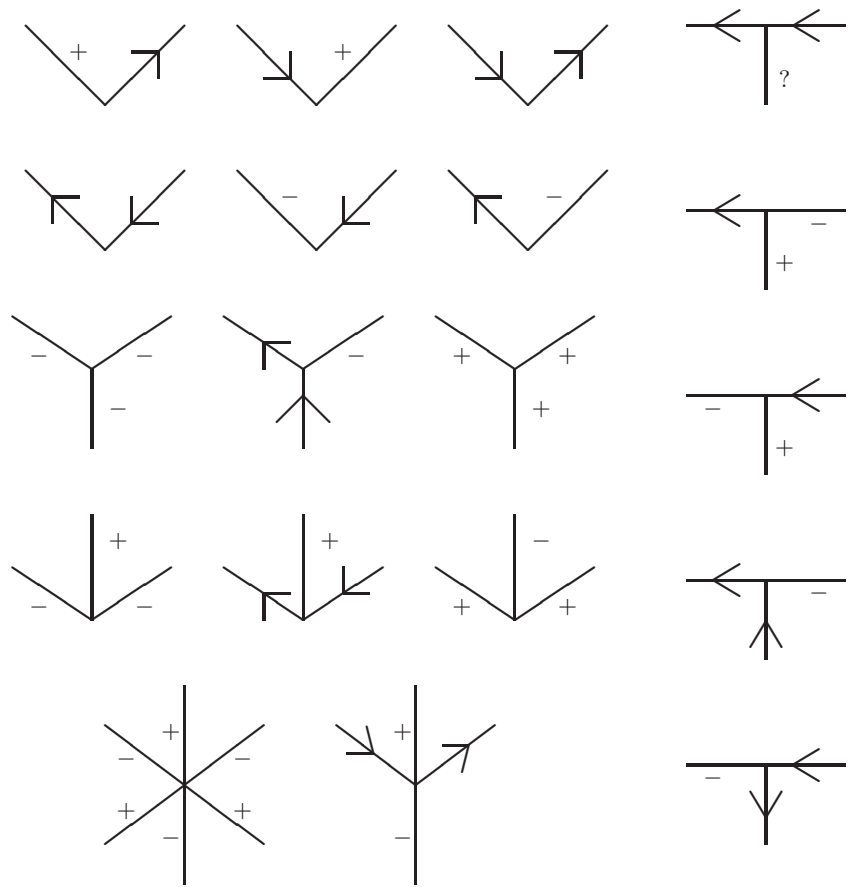

\thicklines
\setlength{\unitlength}{1pt}
\begin{figure}
\centering
\begin{picture}(340,100)(0,-10)

\multiput(0,10)(80,0){4}{\line(0,1){60}} \multiput(20,0)(80,0){4}{\line(0,1){80}}
\multiput(30,20)(80,0){4}{\line(1,4){10}} \multiput(50,0)(80,0){4}{\line(1,4){20}}
\multiput(0,10)(80,0){4}{\line(2,-1){20}} \multiput(0,70)(80,0){4}{\line(2,1){20}}
\multiput(20,0)(80,0){4}{\line(2,1){20}} \multiput(20,80)(80,0){4}{\line(3,-1){33}}
\multiput(30,20)(80,0){4}{\line(1,-1){20}} \multiput(40,60)(80,0){4}{\line(3,2){30}}
\multiput(50,0)(80,0){3}{\line(5,1){35}} \multiput(70,80)(80,0){3}{\line(4,-1){20}}
\put(290,0){\line(5,1){50}} \put(310,80){\line(4,-1){30}} \put(340,10){\line(0,1){63}}

\put(55,65){\makebox(0,0){$_A$}}  \put(36,60){\makebox(0,0){$_B$}}
\put(26,20){\makebox(0,0){$_C$}} \put(44,14){\makebox(0,0){$_D$}} 
\put(20,-4){\makebox(0,0){$_E$}}  \put(0,5){\makebox(0,0){$_F$}}
\put(0,75){\makebox(0,0){$_G$}}  \put(20,84){\makebox(0,0){$_H$}}

\end{picture}
\caption{An example from a family of line drawings whose exponential number of labellings is reduced to one
by snake substitution.} \label{fig:LDblocks}
\end{figure}

To give a non-numerical example, we considered the impact of SS and CNS in the classic
problem of labelling line-drawings of polyhedral scenes composed of objects with
trihedral vertices~\cite{DBLP:journals/ai/Clowes71,Huff1,Waltz}. 
There are six types of trihedral vertices: $A$, $B$, $C$, $D$, $E$ and $F$, shown in Figure~\ref{fig:vertices}.
The aim is to assign each line in the drawing a semantic label among four possibilities: convex ($+$), concave ($-$) or occluding 
($\leftarrow$ or $\rightarrow$ depending whether the occluding surface is above or below the line).
Some lines in the top middle drawing in Figure~\ref{fig:vertices} have been labelled 
to illustrate the meaning of these labels. This problem can be expressed as a binary CSP by treating the junctions as variables.
The domains of variables are given by the catalogue of physically realisable labellings of the corresponding junction 
according to its type. This catalogue of junction labellings is obtained by considering the six vertex types viewed from all possible 
viewpoints~\cite{DBLP:journals/ai/Clowes71,Huff1}. For example,
there are 6 possible labellings of an L-junction, 8 for a T-junction, 5 for a Y-junction and 3 for a W-junction~\cite{ldbook}.
The complete catalogue of labelled junctions is shown in Figure~\ref{fig:catalogue}, where a question
mark represents any of the four labels and rotationally symmetric labellings are omitted.
There is a constraint between any two junctions joined by a line: this line must have the same semantic label
at both ends. We can also apply binary constraints between distant junctions: the 2Reg constraint limits 
the possible labellings of junctions such as $A$ and $D$ in Figure~\ref{fig:LDblocks}, since two non-colinear lines,
such as $AB$ and $CD$, which separate the same two regions cannot both be concave~\cite{lddl,ldbook}.

\thicklines
\setlength{\unitlength}{1pt}
\begin{figure}
\centering
\begin{picture}(160,120)(0,0)

\put(0,0){\line(1,0){120}} \put(10,10){\line(1,0){100}} \put(20,15){\line(1,0){90}}
\put(30,25){\line(1,0){75}} \put(40,30){\line(1,0){65}} \put(50,40){\line(1,0){50}}
\put(60,45){\line(1,0){40}} \put(70,55){\line(1,0){25}} \put(80,60){\line(1,0){15}}
\put(70,75){\line(1,0){25}} \put(50,80){\line(1,0){50}} \put(30,85){\line(1,0){75}}
\put(10,90){\line(1,0){100}} \put(0,100){\line(1,0){120}} \put(40,120){\line(1,0){120}}

\put(0,0){\line(0,1){100}} \put(10,10){\line(0,1){80}} \put(20,15){\line(0,1){75}}
\put(30,25){\line(0,1){60}} \put(40,30){\line(0,1){55}} \put(50,40){\line(0,1){40}}
\put(60,45){\line(0,1){35}} \put(70,55){\line(0,1){20}} \put(80,60){\line(0,1){15}}
\put(95,55){\line(0,1){20}} \put(100,40){\line(0,1){40}} \put(105,25){\line(0,1){60}}
\put(110,10){\line(0,1){80}} \put(120,0){\line(0,1){100}} \put(160,20){\line(0,1){100}}

\put(10,10){\line(2,1){10}} \put(30,25){\line(2,1){10}} \put(50,40){\line(2,1){10}}
\put(70,55){\line(2,1){10}} \put(0,100){\line(2,1){40}} \put(120,100){\line(2,1){40}}
\put(120,0){\line(2,1){40}}

\put(20,94){\makebox(0,0){$_A$}}  \put(25,20){\makebox(0,0){$_B$}}
\put(114.5,15){\makebox(0,0){$_C$}} 

\end{picture}
\caption{An example from a family of line drawings whose exponential number of labellings is reduced to one
by snake substitution.} \label{fig:LDcube}
\end{figure}

The drawings shown in Figure~\ref{fig:LDblocks} and Figure~\ref{fig:LDcube} are
ambiguous. For example, in Figure~\ref{fig:LDblocks}, 
any of lines $AB$, $BC$ or $CD$ could be projections of concave edges (meaning that the two blocks on
the left side of the figure are part of the same object) or all three could be
projections of occluding edges (meaning that these two blocks are, in fact, separate objects).
Similarly, lines $AB$ and $BC$ in Figure~\ref{fig:LDcube} could be projections of occluding or concave edges. 
The drawings shown in Figure~\ref{fig:LDblocks} and Figure~\ref{fig:LDcube} are both examples of 
families of line drawings. In each of these figures there are four copies of the basic structure, but there is a clear
generalisation to drawings containing $n$ copies of the basic structure. The ambiguity that we have pointed
out above gives rise to an exponential number of valid labellings for these families of drawings. However,
after applying arc consistency and snake substitution until convergence, each domain is a singleton
for both these families of line drawings.
We illustrate this by giving one example of a snake substitution. After arc consistency has been established, 
the labelling $(-,+,-)$ for junction $E$ in Figure~\ref{fig:LDblocks} is snake substitutable by $(\leftarrow,+,\leftarrow)$.
This can be seen by consulting Figure~\ref{fig:LDcloseup}
which shows the domains of variables $G$, $F$, $E$, $D$ and $C$ with lines joining compatible
labellings for adjacent junctions: snake substitutability follows from the fact that the labelling $(-,+,-)$ for $E$ can be replaced by 
$(\leftarrow,+,\leftarrow)$ in any global labelling, provided the labelling $(\uparrow,-)$ for $F$ is also
replaced by $(\uparrow,\leftarrow)$ and the labelling $(\leftarrow,-,\leftarrow)$ for $D$ is also replaced
by $(\leftarrow,\leftarrow,\leftarrow)$. Purely for clarity of presentation, some constraints have not been shown
in Figure~\ref{fig:LDcloseup}, notably the 2Reg constraints between distant junctions~\cite{lddl,ldbook}.

Of course, there are line drawings where snake substution is much less effective than in Figures~\ref{fig:LDblocks}
and \ref{fig:LDcube}. Nevertheless, in the six drawings in Figure~\ref{fig:vertices}, which are a representative
sample of simple line drawings, 22 of the 73 junctions have their domains reduced to singletons by arc
consistency alone and a further 20 junctions have their domains reduced to singletons when both arc
consistency and snake substitution are applied. This can be compared with neighbourhood substitution
which eliminates no domain values in this sample of six drawings.
It should be mentioned that we found no examples where conditioned neighbourhood substitution  
led to the elimination of labellings in the line-drawing labelling problem.

\thicklines \setlength{\unitlength}{1pt}
\newsavebox{\lu}
\savebox{\lu}(20,10){
\begin{picture}(20,10)(0,0)
\put(4,8){\line(1,0){13}}  \put(4,8){\line(5,-6){9}}
\end{picture}
}
\newsavebox{\LU}
\savebox{\LU}(10,10){
\begin{picture}(10,10)(0,0)
\put(3.5,7.5){\line(1,0){8}}  \put(3.5,7.5){\line(0,-1){8}}
\end{picture}
}
\newsavebox{\LUminus}
\savebox{\LUminus}(10,10){
\begin{picture}(10,10)(0,0)
\put(0,3){\makebox(0,0){{ $-$}}}
\end{picture}
}
\newsavebox{\LD}
\savebox{\LD}(10,10){
\begin{picture}(10,10)(0,0)
\put(1,1){\line(1,0){8}}  \put(1,1){\line(0,1){8}}
\end{picture}
}
\newsavebox{\LDminus}
\savebox{\LDminus}(10,10){
\begin{picture}(10,10)(0,0)
\put(0,5){\makebox(0,0){$-$}}
\end{picture}
}
\newsavebox{\UP}
\savebox{\UP}(10,14){
\begin{picture}(10,14)(0,-4)
\put(0,6){\line(3,-5){6}}  \put(0,6){\line(-3,-5){6}}
\end{picture}
}
\newsavebox{\ovalthree}
\savebox{\ovalthree}(40,150){
\begin{picture}(40,150)(0,0)
\put(20,75){\oval(40,150)} \multiput(20,50)(0,40){3}{\makebox(0,0){{\large $\bullet$}}}
\end{picture}
}

\thicklines
\setlength{\unitlength}{1pt}
\begin{figure}
\centering
\begin{picture}(280,165)(0,0)

\put(10,33){\usebox{\ru}} \put(10,113){\usebox{\ru}}
\put(10,19){\usebox{\UP}} \put(10,59){\usebox{\UP}}  
\multiput(10,19)(0,40){3}{\line(0,1){14}} \multiput(10,33)(0,40){3}{\line(2,1){20}}
\put(21,73){\makebox{$-$}} \put(12,103){\makebox{$-$}} 

\put(74,60){\usebox{\lu}} \put(74,100){\usebox{\lu}} \put(74,30){\usebox{\UP}} \put(74,70){\usebox{\UP}}
\multiput(74,30)(0,40){3}{\line(0,1){14}} \multiput(74,30)(0,40){3}{\line(2,-1){20}}
\put(85,25){\makebox{$-$}} \put(76,115){\makebox{$-$}}

\put(120,83){\usebox{\lu}} \put(140,83){\usebox{\ld}}
\multiput(140,23)(0,60){2}{\line(0,1){20}} \multiput(140,23)(0,60){2}{\line(2,1){19}} \multiput(140,23)(0,60){2}{\line(-2,1){19}}
\put(120,24){\makebox{$-$}} \multiput(142,36)(0,60){2}{\makebox{$+$}} \put(153,24){\makebox{$-$}}

\put(190,103){\usebox{\LD}} \put(190,63){\usebox{\LD}} \put(190,23){\usebox{\LDminus}}
\put(200,23){\usebox{\LU}} \put(190,33){\usebox{\LU}} \put(200,63){\usebox{\LU}} \put(190,73){\usebox{\LU}}
\put(200,103){\usebox{\LU}} \put(190,113){\usebox{\LUminus}} 
\multiput(190,23)(0,40){3}{\line(1,1){10}} \multiput(190,43)(0,40){3}{\line(1,-1){20}}

\put(254,20){\usebox{\lu}} \put(254,60){\usebox{\lu}}
\put(254,30){\usebox{\UP}} \put(254,110){\usebox{\UP}}
\multiput(254,30)(0,40){3}{\line(0,1){14}} \multiput(254,30)(0,40){3}{\line(2,-1){20}}
\put(255,73){\makebox{$-$}} \put(266,105){\makebox{$-$}}

\multiput(0,0)(60,0){2}{\usebox{\ovalthree}}  \multiput(180,0)(60,0){2}{\usebox{\ovalthree}} 
\put(140,75){\oval(45,150)} 
\multiput(140,50)(0,60){2}{\makebox(0,0){{\large $\bullet$}}}
\put(20,50){\line(1,0){240}} \put(20,90){\line(1,0){60}} \put(20,130){\line(1,0){60}}
\put(200,90){\line(1,0){60}} \put(200,130){\line(1,0){60}}  \put(80,90){\line(3,1){120}}  
\put(20,50){\line(3,2){60}} \put(80,130){\line(3,-1){120}}  \put(200,90){\line(3,-2){60}}
\put(200,50){\line(3,2){60}}
\put(20,157){\makebox(0,0){$G$}} \put(80,157){\makebox(0,0){$F$}}
\put(140,157){\makebox(0,0){$E$}} \put(200,157){\makebox(0,0){$D$}} \put(260,157){\makebox(0,0){$C$}}

\end{picture}
\caption{A close-up view of the variables corresponding to junctions $G$, $F$, $E$, $D$ and $C$
in Figure~\ref{fig:LDblocks}.} \label{fig:LDcloseup}
\end{figure}
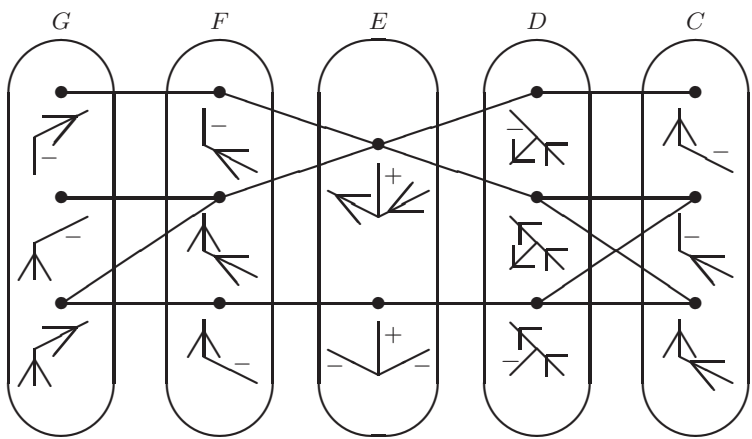

\section{Complexity}  \label{sec:complexity}

In a binary CSP instance $(X,\mathcal{D},R)$,  
we say that two variables $x_i, x_j \in X$ constrain each other if there is a non-trivial constraint between them
(i.e. $R_{ij} \neq \mathcal{D}(x_i) \times \mathcal{D}(x_j)$). Let 
$E \subseteq \{1,\ldots,n\} \times \{1,\ldots,n\}$ denote the set of pairs $\{i,j\}$ such that $x_i,x_j$ constrain each other.
We use $d$ to denote the maximum size of the domains $\mathcal{D}(x_i)$
and $e = |E|$ to denote the number of non-trivial binary constraints.
In this section we show that it is possible to apply CNS and SS until convergence in $O(ed^3)$ time and that it
is possible to check SCSS in $O(ed^3)$ time.
Thus, the complexity of applying the value-elimination rules CNS, SS and SCSS is comparable to the $O(ed^3)$ 
time complexity of applying neighbourhood substitution (NS)~\cite{ns}. 
This is interesting because (in instances with more than one variable)
CNS, SS and SCSS all strictly subsume NS.

\subsection{Substitution and arc consistency}

It is well known that arc consistency eliminations can provoke new eliminations by neighbourhood
substitution (NS) but that NS eliminations cannot destroy arc consistency~\cite{ns}.
It follows that arc consistency eliminations can provoke new eliminations by SS, CNS and SCSS
(since these notions subsume NS). It is easily seen from Definition~\ref{def:CNS} that 
eliminations by CNS cannot destroy arc consistency. We therefore assume in this section
that arc consistency has been established before looking for eliminations by any form of substitution.
Nonetheless, unlike CNS, eliminations by SS (or SCSS) can provoke new eliminations by arc consistency;
however, these eliminations cannot themselves propagate. To see this, suppose that $b \in\mathcal{D}(x_i)$
is eliminated since it is snake-substitutable by $a$. If $b$ is the only support of $d \in \mathcal{D}(x_k)$
at $x_i$, then $d$ can then be eliminated by arc consistency. However, the elimination of $d$ cannot
provoke any new eliminations by arc consistency. To see this, recall that, by Definition~\ref{def:SS} of SS, 
there is a value $e \in \mathcal{D}(x_k)$ such that for all $\ell \neq i,k$, for all $f\in \mathcal{D}(x_{\ell})$,
if $d$ was a support for $f$ at $x_k$ then so was $e$ (as illustrated in Figure~\ref{fig:snake}). 
Furthermore, since $b$ was the only support for $d$
at $x_i$, no other value in $\mathcal{D}(x_i)$ can lose its  support when $d$ is eliminated from $\mathcal{D}(x_k)$.
In conclusion, the algorithm for applying SS has to apply this limited form of arc-consistency
(without propagation) whereas the algorithm to apply CNS does not need to test for arc consistency
since we assume that it has already been established. Furthermore, since AC is, in fact, subsumed by SCSS we do
not explicitly need to test for it in the algorithm to apply SCSS.
 
\subsection{Applying SS until convergence}  \label{subsec:SS}

We first give an algorithm for applying SS until convergence. 
The following notions used by the algorithm are best understood by consulting Figure~\ref{fig:snake}.
An assignment $f \in \mathcal{D}(x_{\ell})$ that contradicts some $d \xrightarrow{\scriptstyle{k \ell}} e$
is known as a \emph{block}, and the associated variable $x_{\ell}$ is known as a \emph{block variable}. 
For $a \in \mathcal{D}(x_i)$, the term \emph{sub} denotes a value $e \in \mathcal{D}(x_k)$ that
could replace $d$ when $x_i$ is assigned $a$ (in the sense that 
$(a,e) \in R_{ik}$ and $\forall \ell \notin \{i,k\}, d \xrightarrow{\scriptstyle{k \ell}} e$, as
illustrated in Figure~\ref{fig:snake}). 
In the context of a possible snake substitution of $b \in \mathcal{D}(x_i)$ by $a$, a value $d \in \mathcal{D}(x_k)$ 
that does not have a sub that could replace it is 
a \emph{stop} and the corresponding variable $x_k$ is a \emph{stop variable}. 
If there are no stop variables, then $b$ is snake substitutable by $a$.
The algorithm for applying SS until convergence uses the following data structures:
\begin{itemize}
\item For all $\{k,\ell\} \in E$, for all $d,e \in \mathcal{D}(x_k)$, \\
NbBlocks($k,d,e,\ell$) $=$ $|\{ f \in \mathcal{D}(x_{\ell}) \mid (d,f) \in R_{k \ell} \land (e,f) \notin R_{k \ell} \}|$.
\item For all $k \in \{1,\ldots,n\}$, for all $d,e \in \mathcal{D}(x_k)$, \\
BlockVars($k,d,e$) $=$ $\{ \ell \mid \{k,\ell\} \in E \land \text{NbBlocks}(k,d,e,\ell) > 0 \}$. \\ If 
BlockVars($k,d,e)$ $=$ $\emptyset$, for some $e \neq d$, then $d$ can be eliminated from $\mathcal{D}(x_k)$ by
neighbourhood substitution.
\item For all $\{i,k\} \in E$, for all $a \in \mathcal{D}(x_i)$, for all $d \in \mathcal{D}(x_k)$
such that $(a,d) \notin R_{ik}$, \\  NbSubs($i,a,k,d$) $=$ $| \{ e \in \mathcal{D}(x_k) \mid (a,e) \in R_{ik}
\land \text{BlockVars}(k,d,e) \subseteq \{i\} \} |$. \\
Note that $\text{BlockVars}(k,d,e) \subseteq \{i\} $ if and only if $\forall \ell \notin \{i,k\}, d \xrightarrow{\scriptstyle{k \ell}} e$.
\item For all $\{i,k\} \in E$, for all $a,b \in \mathcal{D}(x_i)$, 
NbStops($i,a,b,k$) $=$ \\ $| \{ d \in \mathcal{D}(x_k) \mid 
(b,d) \in R_{ik} \land (a,d) \notin R_{ik} \land \text{NbSubs}(i,a,k,d) = 0 \} |$.
\item For all $i \in \{1,\ldots,n\}$, for all $a,b \in \mathcal{D}(x_i)$, \\
NbStopVars($i,a,b$) $=$ $| \{ k \mid \{i,k\} \in E \land \text{NbStops}(i,a,b,k) \neq 0 \} |$.
\item  For all $i \in \{1,\ldots,n\}$, for all $b \in \mathcal{D}(x_i)$, \\
NbSnake($i,b$) $=$ $| \{ a \in \mathcal{D}(x_i) \mid \text{NbStopVars}(i,a,b) = 0 \} |$. \\
If NbSnake($i,b$) $> 0$, then value $b$ can be eliminated from $\mathcal{D}(x_i)$ by snake substitution.
\item  For all $i \in \{1,\ldots,n\}$, for all $b \in \mathcal{D}(x_i)$, \\
Inconsistent($i,b$) $=$ true if $b$ has no support at some other variable. 
\end{itemize}

We assume, throughout this section, that each set (such as BlockVars($k,d,e$)) which is a subset
of a fixed set $S$ (in this case the set of $n$ variables) is stored using
an array data structure whose index ranges over 
the elements of $S$ thus allowing all basic operations, such as insertion and deletion, 
to be performed in $O(1)$ time.
The data structures listed above can clearly be initialised in $O(ed^3)$ time and require $O(ed^2)$ space.
We also use a list data structure ElimList to which we add pairs $\langle i,b \rangle$ if 
NbSnake($i,b$) $> 0$ (indicating that $b$ could be eliminated by snake substitition). 
Note that due to possible eliminations of other values, from the same
or other domains, between the detection of the snake substitutability of $b$ and the moment this
information is processed, $b$ may no longer be snake substitutable when $\langle i,b \rangle \in$ ElimList
is processed. We choose to give precedence to neighbourhood substitutability over snake substitutability
by placing neighbourhood-substitutable values at the head of the list ElimList (line (1) in the code below)
and snake-substitutable values at the tail. We also add to ElimList values that can be eliminated by arc
consistency.

The processing of elements of ElimList involves the updating of all the above data structures
which can in turn lead to the detection of new eliminations. 
The algorithm given in Figure~\ref{fig:SSalgo} performs eliminations
and propagates until convergence, assuming that the above data structures (including ElimList) have all been initialised.
When a value $u$ is eliminated from $\mathcal{D}(x_r)$ we have to update NbBlocks 
because $u$ may correspond to value $f$ in Figure~\ref{fig:snake}, NbSubs
because $u$ may correspond to value $e$ in Figure~\ref{fig:snake}, NbStops
because $u$ may correspond to the value $d$ in Figure~\ref{fig:snake}, NbSnake
because $u$ may correspond to the value $a$ in Figure~\ref{fig:snake}, and Inconsistent
when $u$ corresponds to value $b$ in Figure~\ref{fig:snake}.

\begin{figure} \centering
\fbox{\parbox{12cm}{ 
\begin{tabbing}
\hspace{4mm} \= \hspace{4mm} \= \hspace{4mm} \= \hspace{4mm} \= \hspace{4mm} \= \hspace{4mm} \= 
\hspace{4mm} \= \hspace{4mm} \= \hspace{4mm} \= \hspace{4mm} \= \hspace{-47mm} 
while ElimList $\neq \emptyset$ : \\
\> pop $\langle r,u \rangle$ from ElimList ; \\
\> if $u \in \mathcal{D}(x_r)$ and (NbSnake($r,u$) $> 0$  or Inconsistent($r,u$)) : \\
\> \> delete $u$ from $\mathcal{D}(x_r)$ ; \\
\> \> *** Update NbBlocks and propagate *** \\
\> \> for all $k$ such that $\{k,r\} \in E$ : \\
\> \> \> for all $d,e \in \mathcal{D}(x_k)$ such that $(d,u) \in R_{kr}$ and $(e,u) \notin R_{kr}$ : \\
\> \> \> \> NbBlocks($k,d,e,r$) := NbBlocks($k,d,e,r$) $- 1$ ; \\
\> \> \> \> if NbBlocks($k,d,e,r$) $= 0$  then \\
\> \> \> \> \> delete $r$ from BlockVars($k,d,e$) ; \\
\> \> \> \> \> if BlockVars($k,d,e$) becomes $\emptyset$ then  \ \ *** NS *** \\
\> \> \> \> \> \> add $\langle k,d \rangle$ to head of ElimList ; .......................................(1) \\
\> \> \> \> \> if BlockVars($k,d,e$) becomes a singleton $\{i\}$ then \\
\> \> \> \> \> \> for all $a \in \mathcal{D}(x_i)$ such that $(a,d) \notin R_{ik}$ and $(a,e) \in R_{ik}$ : \\
\> \> \> \> \> \> \> NbSubs($i,a,k,d$) := NbSubs($i,a,k,d$) $+ 1$ ; ...........(2) \\
\> \> \> \> \> \> \> if NbSubs($i,a,k,d$) $= 1$ then DecStops($i,a,b,k$) ; \\
\> \> for all $i$ such that $\{i,r\} \in E$ : \\
\> \> \> *** Update NbSubs and propagate *** \\
\> \> \> for all $(a,d) \in \mathcal{D}(x_i) \times \mathcal{D}(x_r)$ such that $(a,d) \notin R_{ir}$ : \\
\> \> \> \> if $(a,u) \in R_{ir}$ and BlockVars($r,d,u$) $\subseteq \{i\}$ then .....................(3) \\
\> \> \> \> \> NbSubs($i,a,r,d$) := NbSubs($i,a,r,d$) $- 1$ ; \\
\> \> \> \> \> if NbSubs($i,a,r,d$) $= 0$ then \\
\> \> \> \> \> \> for all $b \in \mathcal{D}(x_i)$ such that $(b,d) \in R_{ir}$ : \\
\> \> \> \> \> \> \> IncStops($i,a,b,r$) ; \\
\> \> \> *** Update NbStops and propagate *** \\
\> \> \> for all $a,b \in \mathcal{D}(x_i)$ such that $(b,d) \in R_{ir}$ and $(a,d) \notin R_{ir}$ \\
\> \> \> \> and NBSubs($i,a,r,d$) $= 0$ :    \\   
\> \> \> \> \> DecStops($i,a,b,r$) ; \  .............................................................(4) \\
\> \> \> *** Update Inconsistent *** \\
\> \> \> for all $v \in \mathcal{D}(x_i)$ : \\
\> \> \> \> if $v$ has no support at $x_r$ then \ \ *** not AC *** \\
\> \> \> \> \> Inconsistent($i,v$) := true ; \ \ add $(i,v)$ to head of ElimList ; \\
\> \> *** Update NbSnake *** \\
\> \> for all $b \in \mathcal{D}(x_r) \setminus \{u\}$ such that NbStopVars($r,u,b$) $= 0$ : \\
\> \> \> NbSnake($r,b$) := NbSnake($r,b$) $- 1$ ;
\end{tabbing}
}}
\caption{The propagation algorithm for applying SS until convergence.}  \label{fig:SSalgo}
\end{figure}
The subprograms DecStops and IncStops for updating NbStops (and the consequent updating
of NbStopVars and NbSnake) are given in Figure~\ref{fig:DecInc}.

\begin{figure} \centering
\fbox{\parbox{12cm}{
\begin{tabbing}
\hspace{4mm} \= \hspace{4mm} \= \hspace{4mm} \= \hspace{4mm} \= \hspace{4mm} \= \hspace{4mm} \= 
\hspace{4mm} \= \hspace{4mm} \= \hspace{4mm} \= \hspace{4mm} \= \hspace{-47mm} 
procedure DecStops($i,a,b,r$) : \\
\> NbStops($i,a,b,r$) := NbStops($i,a,b,r$) $- 1$ ; \\
\> if NbStops($i,a,b,r$) $= 0$ then \\
\> \> NbStopVars($i,a,b$) := NbStopVars($i,a,b$) $- 1$ ; \\
\> \> if NbStopVars($i,a,b$) $= 0$ then \\
\> \> \> NbSnake($i,b$) := NbSnake($i,b$) $+ 1$ ; \\
\> \> \> if NbSnake($i,b$) $= 1$ then \ \ \ *** NbSnake($i,b$) becomes non-zero *** \\
\> \> \> \> add $\langle i,b \rangle$ to the end of ElimList ; \ \ *** SS *** 
\end{tabbing}
\begin{tabbing}
\hspace{4mm} \= \hspace{4mm} \= \hspace{4mm} \= \hspace{4mm} \= \hspace{4mm} \= \hspace{4mm} \= 
\hspace{4mm} \= \hspace{4mm} \= \hspace{4mm} \= \hspace{4mm} \= \hspace{-47mm} 
procedure IncStops($i,a,b,r$) : \\
\> NbStops($i,a,b,r$) := NbStops($i,a,b,r$) $+ 1$ ; \\
\> if NbStops($i,a,b,r$) $= 1$ then \ \ \ *** NbStops($i,a,b,r$) becomes non-0 *** \\
\> \> NbStopVars($i,a,b$) := NbStopVars($i,a,b$) $+ 1$ ; \\
\> \> if NbStopVars($i,a,b$) $= 1$ then \ \ \ *** NbStopVars($i,a,b$) becomes non-0 *** \\
\> \> \> NbSnake($i,b$) := NbSnake($i,b$) $- 1$ ; 
\end{tabbing}
}}
\caption{Subprograms DecStops and IncStops used when applying SS until convergence.}  \label{fig:DecInc}
\end{figure}

We now analyse the complexity of the propagation algorithm (Figure~\ref{fig:SSalgo}). 
First, observe that the subprograms DecStops and IncStops
both have $O(1)$ time complexity.
Both NbBlocks and BlockVars are monotone decreasing, so the total number of updates to these two data
structures is clearly $O(ed^3)$. For fixed $k,d,e$, since BlockVars($k,d,e$) is monotone, it can become equal to a singleton
at most once. This implies that the total number of times line (2) will be executed is $O(nd^3)$.
Since there can be at most $d$ eliminations from each domain, lines (3) and (4) can be executed at most $(ed^3)$ times.
It follows that the time complexity of this algorithm to apply SS until convergence
(including the initialisation and propagation steps) is $O(ed^3)$. 
We state formally in the following theorem what we have proved in this section.

\begin{theorem}
Value eliminations by snake substitution can be applied until convergence in $O(ed^3)$ time and $O(ed^2)$ space.
\end{theorem}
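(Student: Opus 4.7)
The plan is to exhibit an algorithm that maintains a layered collection of counters so that each SS-eliminable value is flagged in amortised constant time, and then to bound the total update work via monotonicity. The condition $b\overset{ik}{\rightsquigarrow} a$ unfolds into three nested quantifiers (over the neighbour $x_k$, over $d\in\mathcal{D}(x_k)$, and over the block variable $x_\ell$), so re-checking SS from scratch after every deletion would cost $\Omega(ed^3)$ per eliminated value and give an $O(ed^4)$ total. The remedy is to store, for each constraint $(i,k)\in E$ and each pair $d,e\in\mathcal{D}(x_k)$, the set of block variables $\ell$ such that $d\xrightarrow{\scriptstyle{k\ell}} e$ fails, together with a count of blocking values per block variable. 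These tables use $O(ed^2)$ space and can be initialised by a single $O(ed^3)$ pass over compatible and incompatible triples.

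On top of this blocking layer I would introduce derived counters: for each $(i,a,k,d)$ a count of admissible substitutes $e\in\mathcal{D}(x_k)$ (values with $(a,e)\in R_{ik}$ whose only possible block variable is $i$); for each $(i,a,b,k)$ a count of \emph{stops} (values $d$ incompatible with $a$ but compatible with $b$ that have no admissible substitute); a count of stop variables per $(i,a,b)$; and finally a count of witnesses $a$ establishing snake-substitutability of $b$. Every elimination from $\mathcal{D}(x_r)$ triggers a cascade of updates outward from the blocking layer: each counter that crosses $0$ or $1$ adjusts the counters at the next layer by $\pm 1$. Correctness reduces to invariant maintenance --- each counter continues to match its definition after every pop from the elimination queue --- and to giving NS and arc-consistency eliminations priority in the queue, which prevents spurious attempts at SS while still forcing convergence.

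The main obstacle, and the core of the complexity analysis, is accounting for the cascade triggered when BlockVars$(k,d,e)$ shrinks to a singleton $\{i\}$, because this event forces a scan over $\mathcal{D}(x_i)$ to refresh the substitute counters. By monotonicity BlockVars$(k,d,e)$ can reach a singleton at most once during the run, so these scans cost $O(d)$ per triple $(k,d,e)$ on a given constraint and $O(ed^3)$ in total; every other update path --- decrementing NbBlocks, NbSubs, NbStops and NbSnake, and performing the SS-induced limited arc-consistency check --- is similarly charged to a distinct compatible or incompatible triple and is therefore also bounded by $O(ed^3)$. The remaining implementation step is to store each BlockVars set as a flat array indexed by variable so that insert and delete operations cost $O(1)$; combining this with the $O(ed^3)$ initialisation and the $O(ed^2)$ overall storage yields the announced bounds.
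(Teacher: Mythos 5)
Your proposal is correct and follows essentially the same route as the paper: the identical hierarchy of counters (blocking counts and block-variable sets, then substitute counts, stop counts, stop-variable counts, and finally a witness count per value), the same amortisation of the singleton-BlockVars cascade via monotonicity, the same priority for NS and the limited arc-consistency handling, and the same array representation giving $O(1)$ set operations. No substantive differences to report.
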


\subsection{Applying CNS until convergence}

Before giving an algorithm for performing value-eliminations by CNS until convergence, we
first consider the interaction between neighbourhood substitution and CNS.
Recall that CNS subsumes neighbourhood substitution. It is also clear from Definition~\ref{def:CNS} of CNS
that eliminating values by neighbourhood substitution cannot prevent elimination of other values by CNS.
However, the converse is not true: eliminations by CNS can prevent eliminations 
of other values by NS. To see this, 
consider a 2-variable instance with constraint $(x_1 = x_2) \vee (x_2=0)$ and domains
$\mathcal{D}(x_1) = \{1,\ldots,d-1\}$, $\mathcal{D}(x_2) = \{0,\ldots,d-1\}$. The value $0 \in \mathcal{D}(x_2)$
can be eliminated by CNS (conditioned by the variable $x_1$) since $\forall c \in \mathcal{D}(x_1)$,
$\exists a = c \in \mathcal{D}(x_2) \setminus \{0\}$ such that $(a,c) \in R_{12}$.
After eliminating $0$ from $\mathcal{D}(x_2)$, no further eliminations are possible by CNS or neighbourhood substitution.
However, in the original instance we could have eliminated all elements of $\mathcal{D}(x_2)$ except $0$
by neighbourhood substittion. Thus, in our algorithm to apply CNS, we give priority to eliminations by NS.

In the context of a substitution of $b$ at $x_i$ conditioned by $x_j$ (see Figure~\ref{fig:CNS}), we say that
$a \in \mathcal{D}(x_i) \! \setminus \! \{b\}$ is a \emph{cover} for $c \in \mathcal{D}(x_j)$ if
$(a,c) \in R_{ij}$ and $\forall k \notin \{i,j\},  b \xrightarrow{\scriptstyle{ik}} a$. The algorithm for applying CNS until convergence 
requires the data structures NbBlocks and BlockVars, as described in Section~\ref{subsec:SS}, 
along with the following data structures:
\begin{itemize}
\item For all $\{i,j\} \in E$, for all $b \in \mathcal{D}(x_i)$, for all $c \in \mathcal{D}(x_j)$, \ \
NbCovers($i,b,j,c$) $=$ \\ $| \{ a \in \mathcal{D}(x_i) \! \setminus \! \{b\}
 \mid (a,c) \in R_{ij} \land \text{BlockVars}(i,b,a) \subseteq \{j\} \} |$.
\item For all $\{i,j\} \in E$, for all $b \in \mathcal{D}(x_i)$, \\
Uncovered($i,b,j$) $=$ $\{ c \in D(x_j) \mid (b,c) \in R_{ij} \land \text{NbCovers}(i,b,j,c) = 0 \}$. \\
If Uncovered($i,b,j$) $= \emptyset$ then $b$ can be eliminated from $\mathcal{D}(x_i)$ by CNS
conditioned by variable $x_j$.
\end{itemize}
The algorithm uses two separate lists for possible eliminations:
\begin{itemize}
\item NSlist is a list of triples $(i,b,a)$ such that $b$ can be eliminated from $\mathcal{D}(x_i)$ by
neighbourhood substitution by $a$, provided neither $a$ nor $b$ have already been eliminated from $\mathcal{D}(x_i)$.
\item CNSlist is a list of tuples $(i,b,j)$ such that $b$ can be elminated from $\mathcal{D}(x_i)$ by
CNS conditioned by variable $x_j$, provided Uncovered($i,b,j$) is still empty.
\end{itemize}
It is easy to see that these data structures require $O(ed^2)$ space and can be initialised in
$O(ed^3)$ time.

The algorithm given in Figure~\ref{fig:CNSalgo} processes elements on the two lists 
NSlist and CNSlist, and propagates eliminations until convergence, assuming that the above data structures
have been initialised after first establishing arc consistency. 
Recall that eliminations by CNS cannot destroy arc consistency.
When $u$ is eliminated from $\mathcal{D}(x_r)$, we have
to update NbBlocks because $u$ may correspond to $d$ in Figure~\ref{fig:CNS}, update NbCovers because $u$
may correspond to $a$ in Figure~\ref{fig:CNS}, and update Uncovered because $u$ may correspond to
$c$ in Figure~\ref{fig:CNS}.

\begin{figure} \centering
\fbox{\parbox{12cm}{
\begin{tabbing}
\hspace{4mm} \= \hspace{4mm} \= \hspace{4mm} \= \hspace{4mm} \= \hspace{4mm} \= \hspace{4mm} \= 
\hspace{4mm} \= \hspace{4mm} \= \hspace{4mm} \= \hspace{4mm} \= \hspace{-47mm} 
while (NSlist $\neq \emptyset$) or (CNSlist  $\neq \emptyset$)  : \\
\> if NSlist $\neq \emptyset$ then \\
\> \> pop $(p,u,v)$ from NSlist ; \\
\> \> OK := \ ($u,v \in \mathcal{D}(x_p)$) ; \\
\> else \\
\> \> pop $(p,u,q)$ from CNSlist ; \\
\> \> OK := \ ($u \in \mathcal{D}(x_p) \ \land \ \text{Uncovered}(p,u,q))$ ; \\
\> if OK then \\
\> \> delete $u$ fom $\mathcal{D}(x_p)$ ; \\
\> \> *** Update NbBlocks and propagate *** \\
\> \> for all $i$ such that $\{i,p\} \in E$ : \\
\> \> \> for all $a,b \in \mathcal{D}(x_i)$ such that $(b,u) \in R_{ip}$ and $(a,u) \notin R_{ip}$ : ............(1) \\
\> \> \> \> NbBlocks($i,b,a,p$) := NbBlocks($i,b,a,p$) $- 1$ ; \\
\> \> \> \> if NbBlocks($i,b,a,p$) $= 0$ then \\
\> \> \> \> \> BlockVars($i,b,a$) := BlockVars($i,b,a$) $\setminus \{p\}$ ; \\
\> \> \> \> \> if BlockVars($i,b,a$) becomes $\emptyset$ then \ \ \ *** NS ***  \\
\> \> \> \> \> \> add $(i,b,a)$ to NSlist ; \\
\> \> \> \> \> if BlockVars($i,b,a$) becomes a singleton $\{j\}$ then ....................(2) \\
\> \> \> \> \> \> for all $c \in \mathcal{D}(x_j)$ :  ...............................................................(3) \\ 
\> \> \> \> \> \> \> if $(a,c) \in R_{ij}$ then \\
\> \> \> \> \> \> \> \> NbCovers($i,b,j,c$) := NbCovers($i,b,j,c$) $+ 1$ ; \\
\> \> \> \> \> \> \> \> Uncovered($i,b,j$) := Uncovered($i,b,j$) $\setminus \{c\}$ ; \\
\> \> \> \> \> \> \> \> if Uncovered($i,b,j$) becomes $\emptyset$ then \ \ \ *** CNS *** \\
\> \> \> \> \> \> \> \> \> add $(i,b,j)$ to CNSlist ; \\
\> \> *** Update NbCovers and propagate *** \\
\> \> for all $b \in \mathcal{D}(x_p) \setminus \{u\}$ : \\
\> \> \> for all $j$ such that $\{j,p\} \in E$ : \\
\> \> \> \> for all $c \in \mathcal{D}(x_j)$ :  ............................................................................(4) \\
\> \> \> \> \> if $(u,c) \in R_{pj}$ and BlockVars($p,b,u$) $\subseteq \{j\}$ then \\
\> \> \> \> \> \> NbCovers($p,b,j,c$) := NbCovers($p,b,j,c$) $- 1$ ; \\
\> \> \> \> \> \> if NbCovers($p,b,j,c$) $= 0$ then \\
\> \> \> \> \> \> \> add $c$ to Uncovered($p,b,j$) ; \\
\> \> *** Update  Uncovered *** \\
\> \> for all $i$ such that $\{i,p\} \in E$ :\\
\> \> \> for all $b \in \mathcal{D}(x_i)$ : ..................................................................................(5) \\
\> \> \> \> Uncovered($i,b,p$) := Uncovered($i,b,p$) $\setminus \{u\}$ ; \\
\> \> \> \> if Uncovered($i,b,p$) becomes $\emptyset$ then \ \ \ *** CNS *** \\
\> \> \> \> \> add $(i,b,p)$ to CNSlist ; 
\end{tabbing}
}}
\caption{The propagation algorithm for applying CNS until convergence.}  \label{fig:CNSalgo}
\end{figure}

The time complexity of this algorithm is clearly determined by the complexity of the loops at lines (1), (3), (4) and (5).
Since there are at most $d$ eliminations from each domain, we can easily see that lines (1) and (4) are executed
$O(ed^3)$ times, whereas line (5) is executed $O(ed^2)$ times.
Since each BlockVars($i,a,b$) is monotone decreasing, it can become equal to a singleton at most once. It follows
that line (2) is executed $O(nd^2)$ times and hence that line (3) is executed $O(nd^3)$ times.
We can conclude that that initalisation and propagation steps both require $O(ed^3)$ time
and $O(ed^2)$ space. We state formally in the following theorem what we have just proved.

\begin{theorem}
Value eliminations by conditioned neighbourhood substitution can be applied 
until convergence in $O(ed^3)$ time and $O(ed^2)$ space.
\end{theorem}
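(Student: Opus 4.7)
The plan is to establish both correctness and the complexity bounds for the propagation algorithm in Figure~\ref{fig:CNSalgo}. Correctness of the \emph{eliminations} themselves follows already from Corollary~\ref{cor:SS+CNS}, so what remains is to verify that the algorithm (i) maintains the stated meanings of \textsf{NbBlocks}, \textsf{BlockVars}, \textsf{NbCovers} and \textsf{Uncovered} as an invariant after each deletion, and (ii) terminates precisely when no further NS or CNS elimination is possible. The invariants are local and can be checked by tracing which of $a,b,c,d$ in Figure~\ref{fig:CNS} the deleted value $u$ can correspond to, giving the four update blocks in the code. The guard \texttt{OK} at the top of the loop ensures that a queued triple is discarded if the value has since disappeared or if a previously empty \textsf{Uncovered} set has been refilled, so the algorithm never performs a spurious deletion.

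For the space bound, I would observe that each main data structure is indexed by either an edge together with two domain values, or a variable together with two or three domain values. Storing \textsf{BlockVars}($k,d,e$) as an array ranged over the neighbours of $x_k$ (rather than over all $n$ variables) gives $\sum_k d_k \cdot d^2 = O(ed^2)$ total, and similarly \textsf{NbBlocks}, \textsf{NbCovers} and \textsf{Uncovered} each fit in $O(ed^2)$. Initialisation is then straightforward: a triple scan over edges and values fills everything in $O(ed^3)$ time.

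The heart of the argument is amortising the main loop. Since each domain is shrunk at most $d$ times, there are $O(nd)$ outer iterations, one per deleted value $u \in \mathcal{D}(x_p)$. I would bound each inner loop by summing $d_p$ over these iterations and using $\sum_p d_p = 2e$: line~(1) costs $O(d_p d^2)$ per deletion and so $O(ed^3)$ in total; line~(4) is bounded identically; line~(5) is $O(d_p d)$ per deletion and contributes only $O(ed^2)$. The remaining, more delicate, piece is lines~(2)--(3). Here one exploits monotonicity: for each fixed triple $(i,b,a)$ the set \textsf{BlockVars}($i,b,a$) only loses elements, so it can pass through the singleton state at most once during the whole run. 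There are $O(nd^2)$ such triples, so line~(2) fires $O(nd^2)$ times in aggregate, and the nested loop at line~(3) (over $c \in \mathcal{D}(x_j)$) brings the total to $O(nd^3) = O(ed^3)$ once we discard any isolated variables. Combining these bounds with the $O(ed^3)$ initialisation yields the claim.

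The step I expect to be the main obstacle is precisely the amortisation at lines~(2)--(3): a naive reading would charge line~(3) each time some \textsf{BlockVars} set loses an element, which is too expensive. The monotonicity observation — that a shrinking set becomes a singleton at most once — is what rescues the bound, and it has to be tied explicitly to the triple $(i,b,a)$ rather than to any looser aggregate. A second subtlety worth verifying, though routine, is that giving priority to \textsf{NSlist} over \textsf{CNSlist} does not inflate the complexity: each NS deletion is handled in exactly the same framework and only tightens the accounting, while the \texttt{OK} guard guarantees each queued event is processed in $O(1)$ amortised overhead before the data-structure updates.
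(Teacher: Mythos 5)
Your proposal is correct and follows essentially the same route as the paper: the same $O(ed^3)$ amortised bounds for lines (1) and (4), the same $O(ed^2)$ bound for line (5), and the same key monotonicity observation that each \textsf{BlockVars}($i,b,a$) can become a singleton at most once, giving $O(nd^2)$ firings of line (2) and $O(nd^3)$ work at line (3). The additional remarks on invariant maintenance, the \texttt{OK} guard, and the NS-over-CNS priority are consistent with the paper's (more terse) presentation.
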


\subsection{Applying SCSS until convergence}

In this subsection we study snake-conditioned snake substitution (SCSS). We show that
it is possible to check whether some SCSS is possible in a binary CSP instance in $O(ed^3)$ time and 
$O(ed^2)$ space. Although we do not have the same bound for applying SCSS value-eliminations
until convergence, from a practical point of view this is not necessarily an important point.
It is well known in the constraint programming community that non-optimised algorithms
such as AC-3~\cite{DBLP:journals/ai/Mackworth77} are in practice no slower on average 
than their optimised counterparts, such as AC-2001~\cite{ac}. The upper bound we give on the
time complexity of applying SCSS until convergence of $O(end^5)$ is no doubt very
pessimistic and will probably only be attained in pathological examples.

Among the data structures that allow us to check for SCSS eliminations, we require
NbBlocks, BlockVars, NbSubs and NbStops, as defined in Section~\ref{subsec:SS}. We also require
the following data structures (where a \emph{snake cover} for $c \in \mathcal{D}(x_j)$ corresponds
to the value $a \in \mathcal{D}(x_i)$ of Figure~\ref{fig:SCSS} and Definition~\ref{def:SCSS}):
\begin{itemize}
\item For all $i \in \{1,\ldots,n\}$, for all $a,b \in \mathcal{D}(x_i)$, \\
StopVars($i,a,b$) $=$ $\{ k \mid \{i,k\} \in E \land \text{NbStops}(i,a,b,k) > 0 \}$.
\item For all $\{i,j\} \in E$, for all $b \in \mathcal{D}(x_i)$, for all $c \in \mathcal{D}(x_j)$, \\ 
NbSnakeCovers($i,b,j,c$)  $=$ $| \{ a \! \in \! \mathcal{D}(x_i) \! \setminus \! \{b\} \mid ((a,c) \! \in \! R_{ij}
 \lor \text{NbSubs}(i,a,j,c) > 0) \land \text{StopVars}(i,a,b) \subseteq \{j\} \} |$.
\item For all $\{i,j\} \in E$, for all $b \in \mathcal{D}(x_i)$, \
NotSnakeCovered($i,b,j$) $=$ \\ 
$\{ c \in \mathcal{D}(x_j) \mid (b,c) \in R_{ij} \land \text{NbSnakeCovers}(i,b,j,c) = 0 \}$.
\end{itemize}
It follows from the definition of SCSS, that $b$ can be eliminated from $\mathcal{D}(x_i)$ if and only if
for some $j$ such that $\{i,j\} \in E$, we have NotSnakeCovered($i,b,j$) $= \emptyset$.
The propagation algorithm, shown in Figure~\ref{fig:SCSSalgo}, uses a list ElimList
of triples $(i,b,j)$, where NotSnakeCovered($i,b,j$) has been found to be empty. 

\begin{figure} \centering
\fbox{\parbox{12cm}{
\begin{tabbing}
\hspace{4mm} \= \hspace{4mm} \= \hspace{4mm} \= \hspace{4mm} \= \hspace{4mm} \= \hspace{4mm} \= 
\hspace{4mm} \= \hspace{4mm} \= \hspace{4mm} \= \hspace{4mm} \= \hspace{-47mm} 
while (ElimList $\neq \emptyset$)  : \\
\> pop $(r,u,t)$ from NSlist ; \\
\> if $u \in \mathcal{D}(x_p) \ \land \ \text{NotSnakeCovered}(r,u,t) = \emptyset$ then \ \  *** SCSS still valid *** \\
\> \> delete $u$ fom $\mathcal{D}(x_r)$ ; \\
\> \>  *** Update NbBlocks and propagate *** \\
\> \> for all $k$ such that $\{k,r\} \in E$ : \\
\> \> \> for all $d,e \in \mathcal{D}(x_k)$ such that $(d,u) \in R_{kr}$ and $(e,u) \notin R_{kr}$ : .............(1) \\
\> \> \> \> NbBlocks($k,d,e,r$) := NbBlocks($k,d,e,r$) $- 1$ ; \\
\> \> \> \> if NbBlocks($k,d,e,r$) $= 0$ then \\
\> \> \> \> \> BlockVars($k,d,e$) := BlockVars($k,d,e$) $\setminus \{r\}$ ; \\
\> \> \> \> \> if BlockVars($k,d,e$) becomes $\emptyset$ then \ \ \ \  *** it was $\{r\}$ *** \\
\> \> \> \> \> \> for all $i \neq r$ such that $\{i,k\} \in E$ : \\
\> \> \> \> \> \> \> for all $a \in \mathcal{D}(x_i)$ such that $(a,e) \in R_{ik}$ : .....................(2) \\
\> \> \> \> \> \> \> \> IncNbSubs($i,a,k,d$) ; \ \ \ *** since $e$ is a new sub *** \\
\> \> \> \> \> if BlockVars($k,e,d$) becomes a singleton $\{i\}$ then  \\
\> \> \> \> \> \> for all $a \in \mathcal{D}(x_i)$ such that $(a,e) \in R_{ik}$ :  ............................(3) \\ 
\> \> \> \> \> \> \> IncNbSubs($i,a,k,d$) ; \ \ \ \ \ *** since $e$ is a new sub *** \\
\> \>  *** Update NbSubs and propagate *** \\
\> \> for all $i$ such that $\{i,r\} \in E$ : \\
\> \> \> for all $d \in \mathcal{D}(x_r) \setminus \{u\}$ : \\
\> \> \> \> for all $a \in \mathcal{D}(x_i)$ such that $(a,u) \in R_{ir}$ : ........................................(4) \\  
\> \> \> \> \> if BlockVars($r,u,d$) $\subseteq \{i\}$ then \\
\> \> \> \> \> \> DecNbSubs($i,a,r,d$) ; \ \  *** since sub $u$ has been eliminated ** \\  
\> \>  *** Update NbStops and propagate *** \\
\> \> for all $i$ such that $\{i,r\} \in E$ : \\
\> \> \> for all $a,b \in \mathcal{D}(x_i)$ such that $(b,u) \in R_{ir}$ and $(a,u) \notin R_{ir}$ : ..............(5)  \\
\> \> \> \>  if NbSubs($i,a,r,u$) $= 0$ then : \\
\> \> \> \> \>  DecNbStops($i,a,b,r$) ; \ \ \ *** since stop $u$ has been eliminated *** \\
\> \>  *** Update NbSnakeCovers and propagate *** \\
\> \> for all $j$ such that $\{j,r\} \in E$ : \\
\> \> \> for all $c \in \mathcal{D}(x_j)$ such that $(u,c) \in R_{rj} \lor \text{NbSubs}(r,u,j,c) > 0$ : \\
\> \> \> \> for all $b \in \mathcal{D}(x_r) \setminus \{u\}$ :  ...................................................................(6) \\
\> \> \> \> \> DecNbSnakeCovers($r,b,j,c$) ; \\
\> \>  *** Update  NotSnakeCovered *** \\
\> \> for all $i$ such that $\{i,r\} \in E$ :\\
\> \> \> for all $b \in \mathcal{D}(x_i)$ : \\
\> \> \> \> if $(b,u) \in R_{ir}$ and $\text{NbSnakeCovers}(i,b,r,u) = 0$ : ........................(7) \\
\> \> \> \> \> NotSnakeCovered($i,b,r$) := NotSnakeCovered($i,b,r$) $\setminus \{u\}$ ; \\
\> \> \> \> \> if NotSnakeCovered($i,b,r$) becomes $\emptyset$ then \ \ \ *** SCSS *** \\
\> \> \> \> \> \> add $(i,b,r)$ to ElimList ; 
\end{tabbing}
}}
\caption{The propagation algorithm for applying SCSS until convergence.}  \label{fig:SCSSalgo}
\end{figure}

\begin{figure} \centering
\fbox{\parbox{12cm}{
\begin{tabbing}
\hspace{4mm} \= \hspace{4mm} \= \hspace{4mm} \= \hspace{4mm} \= \hspace{4mm} \= \hspace{4mm} \= 
\hspace{4mm} \= \hspace{4mm} \= \hspace{4mm} \= \hspace{4mm} \= \hspace{-47mm} 
procedure IncNbSubs($i,a,k,d$) : \\
\> NbSubs($i,a,k,d$) := NbSubs($i,a,k,d$) $+ 1$ ; \\
\> if NbSubs($i,a,k,d$) $= 1$ then \ \ \ \ *** NbSubs($i,a,k,d$) becomes non-0 *** \\
\> \> for all $b \in \mathcal{D}(x_i)$ such that $(b,d) \in R_{ik}$ and $(a,d) \notin R_{ik}$ : \\
\> \> \> DecNbStops($i,a,b,k$) ;  \ \ \ \ *** since $d$ is no longer a stop *** \\
\> if $(a,d) \notin R_{ik}$ then \\
\> \> for all $b \in \mathcal{D}(x_i) \setminus \{a\}$ such that SopVars($i,a,b$) $\subseteq \{k\}$ : \\
\> \> \> IncNbSnakeCovers($i,b,k,d$) ; 
\end{tabbing}
\begin{tabbing}
\hspace{4mm} \= \hspace{4mm} \= \hspace{4mm} \= \hspace{4mm} \= \hspace{4mm} \= \hspace{4mm} \= 
\hspace{4mm} \= \hspace{4mm} \= \hspace{4mm} \= \hspace{4mm} \= \hspace{-47mm} 
procedure DecNbSubs($i,a,k,d$) : \\
\> NbSubs($i,a,k,d$) := NbSubs($i,a,k,d$) $- 1$ ; \\
\> if NbSubs($i,a,k,d$) $= 0$ then \\
\> \> for all $b \in \mathcal{D}(x_i)$ such that $(b,d) \in R_{ik}$ and $(a,d) \notin R_{ik}$  : \\
\> \> \> IncNbStops($i,a,b,k$) ; \\
\> if $(a,d) \notin R_{ik}$ then \\
\> \> for all $b \in \mathcal{D}(x_i) \setminus \{a\}$ such that StopVars($i,a,b$) $\subseteq \{k\}$ : \\
\> \> \> DecNbSnakeCovers($i,b,k,d$) ; 
\end{tabbing}
\begin{tabbing}
\hspace{4mm} \= \hspace{4mm} \= \hspace{4mm} \= \hspace{4mm} \= \hspace{4mm} \= \hspace{4mm} \= 
\hspace{4mm} \= \hspace{4mm} \= \hspace{4mm} \= \hspace{4mm} \= \hspace{-47mm} 
procedure IncNbStops($i,a,b,k$) : \\
\> NbStops($i,a,b,k$) := NbStops($i,a,b,k$) $+ 1$ ; \\
\> if NbStops($i,a,b,k$) $= 1$ then \ \ \ \ *** NbStops($i,a,b,k$) becomes non-0 *** \\
\> \> add $k$ to StopVars($i,a,b$) ; \\
\> \> if StopVars($i,a,b$) becomes $\{j,k\}$ for some $j \neq k$ then \ \ *** it is not $\{k\}$ *** \\
\> \> \> for all $c \in \mathcal{D}(x_j)$ such that $(a,c) \in R_{ij} \lor \text{NbSubs}(i,a,j,c) > 0$ : \\
\> \> \> \> DecNbSnakeCovers($i,b,j,c$) ; \\
\> \> if StopVars($i,a,b$) becomes $\{k\}$ then \ \ \ *** it is no longer empty *** \\
\> \> \> for all $j \neq k,i$ : \\
\> \> \> \> for all $c \in \mathcal{D}(x_j)$ such that $(a,c) \in R_{ij} \lor \text{NbSubs}(i,a,j,c) > 0$ : \\
\> \> \> \> \> DecNbSnakeCovers($i,b,j,c$) ; 
\end{tabbing}
\begin{tabbing}
\hspace{4mm} \= \hspace{4mm} \= \hspace{4mm} \= \hspace{4mm} \= \hspace{4mm} \= \hspace{4mm} \= 
\hspace{4mm} \= \hspace{4mm} \= \hspace{4mm} \= \hspace{4mm} \= \hspace{-47mm} 
procedure DecNbStops($i,a,b,k$) : \\
\> NbStops($i,a,b,k$) := NbStops($i,a,b,k$) $- 1$ ; \\
\> if NbStops($i,a,b,k$) $= 0$ then  \\
\> \> delete $k$ from StopVars($i,a,b$) ; \\
\> \> if StopVars($i,a,b$) becomes a singleton $\{j\}$ then  \\
\> \> \> for all $c \in \mathcal{D}(x_j)$ such that $(a,c) \in R_{ij} \lor \text{NbSubs}(i,a,j,c) > 0$ : \\
\> \> \> \> IncNbSnakeCovers($i,b,j,c$) ; \\
\> \> if StopVars($i,a,b$) becomes $\emptyset$ then \ \ \ *** StopVars($i,a,b$) was $\{k\}$ *** \\
\> \> \> for all $j \neq k,i$ : \\
\> \> \> \> for all $c \in \mathcal{D}(x_j)$ such that $(a,c) \in R_{ij} \lor \text{NbSubs}(i,a,j,c) > 0$ : \\
\> \> \> \> \> IncNbSnakeCovers($i,b,j,c$) ; 
\end{tabbing}
}}
\caption{Subprograms IncNbSubs, DecNbSubs, IncNbStops and DecNbStops used in the propagation of SCSS eliminations.}  \label{fig:SCSSsubprograms1}
\end{figure}

\begin{figure} \centering
\fbox{\parbox{12cm}{
\begin{tabbing}
\hspace{4mm} \= \hspace{4mm} \= \hspace{4mm} \= \hspace{4mm} \= \hspace{4mm} \= \hspace{4mm} \= 
\hspace{4mm} \= \hspace{4mm} \= \hspace{4mm} \= \hspace{4mm} \= \hspace{-47mm} 
procedure IncNbSnakeCovers($i,b,j,c$) : \\
\> NbSnakeCovers($i,b,j,c$) := NbSnakeCovers($i,b,j,c$) $+ 1$ ; \\
\> if NbSnakeCovers($i,b,j,c$) $= 1$ then \ \ \ \ *** it  becomes non-0 *** \\
\> \> if $(b,c) \in R_{ij}$ then \\
\> \> \> delete $c$ from NotSnakeCovered($i,b,j$) ; \\
\> \> \> if NotSnakeCovered($i,b,j$) becomes $\emptyset$ then   \\
\> \> \> \> add ($i,b,j$) to ElimList ; \ \ \ \ *** $b \in\mathcal{D}(x_i)$ satisfies SCSS ***
\end{tabbing}
\begin{tabbing}
\hspace{4mm} \= \hspace{4mm} \= \hspace{4mm} \= \hspace{4mm} \= \hspace{4mm} \= \hspace{4mm} \= 
\hspace{4mm} \= \hspace{4mm} \= \hspace{4mm} \= \hspace{4mm} \= \hspace{-47mm} 
procedure DecNbSnakeCovers($i,b,j,c$) : \\
\> NbSnakeCovers($i,b,j,c$) := NbSnakeCovers($i,b,j,c$) $- 1$ ; \\
\> if NbSnakeCovers($i,b,j,c$) $= 0$ and $(b,c) \in R_{ij}$ then \\
\> \> add $c$ to NotSnakeCovered($i,b,j$) ;
\end{tabbing}
}}
\caption{Subprograms IncNbSnakeCovers and
DecNbSnakeCovers used in the propagation of SCSS eliminations.}  \label{fig:SCSSsubprograms2}
\end{figure}

It is easy to see that the above data structures can be calculated in $O(ed^3)$ time and that they
require $O(ed^2)$ space. Hence, the existence of a possible value elimination by SCSS
can be checked in $O(ed^3)$ time and $O(ed^2)$ space

However, once a value $b$ has been eliminated
from some domain $\mathcal{D}(x_i)$ this can affect the validity of any other SCSS value-eliminations that have been
detected, meaning that the above data structures have to be updated for those arguments which might be concerned
by the elimination of $b$ from $\mathcal{D}(x_i)$. The propagation algorithm for updating
these data structures after eliminations by SCSS is given in Figure~\ref{fig:SCSSalgo}
with its subprograms given in Figures~\ref{fig:SCSSsubprograms1} and \ref{fig:SCSSsubprograms2}.

The subprograms IncNbSnakeCovers and DecNbSnakeCovers (Figure~\ref{fig:SCSSsubprograms2}) 
both have time complexity $O(1)$. Thus, 
the subprograms IncNbStops and DecNbStops both have time complexity $O(nd)$, which thus implies an
upper bound of $O(nd^2)$ for each call of IncNbSubs and DecNbSubs
(Figure~\ref{fig:SCSSsubprograms1}). For each $k,e,d$,
BlockVars($k,e,d$) is a monotonic decreasing set; it can therefore become empty (or a singleton) at most once
during propagation of SCSS eliminations (Figure~\ref{fig:SCSSalgo}).
This implies that the total number of calls to IncNbSubs is $O(ed^3)$ (in the loops at lines (2) and (3)
of the propagation algorithm of Figure~\ref{fig:SCSSalgo}).
Since a value $u$ can be eliminated from $\mathcal{D}(x_r)$ at most once, we also have an upper bound
of $O(ed^3)$ for the number of iterations of the loops in lines (1), (4), (5), (6) and (7), each
of which makes one call to one of the subprograms whose complexity we have just studied.
Putting all this together, we obtain an upper bound  of $O(end^5)$ for the time complexity of
applying SCSS until convergence.
We state formally in the following theorem what we have just proved.

\begin{theorem}
It is possible to verify in $O(ed^3)$ time and $O(ed^2)$ space whether or not
any value eliminations by SCSS can be performed on a binary CSP instance. Value eliminations by SCSS can then be applied 
until convergence in $O(end^5)$ time and $O(ed^2)$ space.
\end{theorem}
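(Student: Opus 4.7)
My plan is to establish the two claims separately: the initial check requires only setting up the data structures, whereas bounding the total work of propagation requires a careful count of how often each subprogram can fire.

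For the space bound, I would list each data structure described before the theorem and observe that each is indexed by at most one edge $\{i,j\} \in E$ together with a constant number of values from the relevant domains, or by a single variable $i$ and a constant number of values; the latter contribute $O(nd^c)$ which is absorbed by the $O(ed^2)$ bound provided the instance is connected (and otherwise we handle components separately). Thus NbBlocks, NbSubs, NbSnakeCovers need $O(ed^3)$ entries after all, but each entry is a single counter; to stay within $O(ed^2)$ I would note that these arrays are indexed by an edge and a triple of values on one side (or pair on each side), and reread the definitions carefully—NbBlocks($k,d,e,\ell$) is indexed by an edge $\{k,\ell\}$ and two values in $\mathcal{D}(x_k)$, giving $O(ed^2)$ total; NbSubs and NbSnakeCovers are similarly $O(ed^2)$. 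BlockVars($k,d,e$) and StopVars($i,a,b$) are each indexed by one variable and two of its values, and as subsets of the $n$ variables stored in array form contribute $O(nd^2)$ space, which is dominated by $O(ed^2)$ when $e \geq n$ (and otherwise the instance decomposes). Initial computation of each of these structures is a direct enumeration over edges, values and possibly one additional variable or value, and can be performed in $O(ed^3)$ time. Since $\text{NotSnakeCovered}(i,b,j) = \emptyset$ for some $j$ is the certificate of an SCSS elimination, after initialisation a single pass over those sets decides in $O(ed^2)$ time whether any value elimination is possible, proving the first sentence of the theorem.

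For the propagation bound, the key step is to analyse the subprograms bottom-up. First, IncNbSnakeCovers and DecNbSnakeCovers perform $O(1)$ operations (the list update on NotSnakeCovered is constant-time given the array representation). IncNbStops and DecNbStops fire one such call per $c \in \mathcal{D}(x_j)$ for $O(n)$ possible $j$, giving $O(nd)$ per call. IncNbSubs and DecNbSubs trigger the stops subprograms at most $O(d)$ times, giving $O(nd^2)$ per call. These are the per-call costs that will be multiplied against the number of invocations.

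Next I would bound the number of invocations using monotonicity. Each BlockVars($k,e,d$) is monotone decreasing during propagation, so it can pass through ``singleton'' or ``empty'' status at most once per triple $(k,e,d)$; summed over the $O(nd^2)$ such triples this limits the loops in lines (2) and (3) of Figure~\ref{fig:SCSSalgo} to $O(nd^3)$ inner iterations, each making a single IncNbSubs call at cost $O(nd^2)$, for a total of $O(n^2 d^5)$—which I would then refine to $O(end^5)$ by observing that a call in line (2)/(3) is tied to the edge $\{k,\ell\}$ along which the blocking contribution was removed. Each value $u \in \mathcal{D}(x_r)$ is eliminated at most once, so lines (1), (4), (5), (6), (7) in the main algorithm each iterate $O(ed^3)$ times overall (edge times domains on both sides times the eliminations), and each iteration invokes at most one of DecNbSubs, DecNbStops, or DecNbSnakeCovers. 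Multiplying $O(ed^3)$ iterations by the worst per-call cost $O(nd^2)$ of DecNbSubs gives the claimed $O(end^5)$ bound; the other calls give strictly smaller contributions. Finally, since no new entries are created during propagation and every auxiliary list (ElimList and the supports of the various ``Covered''/``StopVars''/``BlockVars'' sets) is bounded by the sizes we analysed, the space bound of $O(ed^2)$ is preserved throughout.

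The main obstacle in carrying this out rigorously is the amortised counting at lines (2) and (3): I must ensure I am not double-counting calls to IncNbSubs across different triggers of the same BlockVars($k,e,d$) becoming singleton or empty, and that the worst-case multiplicative factor really collapses to $O(end^5)$ rather than a naive $O(n^2 d^5)$. The monotonicity of BlockVars, together with the fact that ``becoming empty'' happens at most once per triple and that the $i$ ranging in the enclosing loop is tied to the unique surviving element of BlockVars($k,e,d$), is what makes this collapse work; once this is pinned down the rest is a direct multiplication of per-call costs and invocation counts.
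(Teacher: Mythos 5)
Your proposal follows essentially the same route as the paper: the same $O(ed^2)$-space data structures initialised in $O(ed^3)$ time for the one-shot check, the same bottom-up per-call costs ($O(1)$ for the SnakeCovers updates, $O(nd)$ for the Stops updates, $O(nd^2)$ for the Subs updates), and the same combination of the monotonicity of BlockVars with the once-per-value elimination count to bound the number of invocations, multiplied out to $O(end^5)$. The only slip is your $O(nd^3)$ count for the loops at lines (2)--(3): line (2) iterates over \emph{all} neighbours $i$ of $x_k$ when BlockVars$(k,d,e)$ becomes empty, so the correct total number of calls to IncNbSubs is $O(ed^3)$, which lands directly on $O(end^5)$ without the detour through $O(n^2d^5)$ and the subsequent ``refinement''.
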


\section{Optimal sequences of eliminations}    \label{sec:nphard}

It is known that applying different sequences of neighbourhood operations until convergence 
produces isomorphic instances~\cite{ns}. 
This is not the case for CNS, SS or SCSS. Indeed, as we show in this section, the problems of maximising the
number of value-eliminations by CNS, SS or SCSS are all NP-hard.
These intractability results do not detract from the utility of these
operations, since any number of value eliminations reduces search-space size regardless
of whether or not this number is optimal.

\begin{theorem}
Finding the longest sequence of CNS value-eliminations or SCSS value-eliminations is NP-hard.
\end{theorem}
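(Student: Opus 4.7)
The plan is to reduce from a known NP-hard problem such as 3-SAT (or one of its optimisation variants). The crucial observation, already illustrated by the 2-variable example in Section~\ref{sec:complexity} (the instance with constraint $(x_1 = x_2) \vee (x_2 = 0)$), is that a single CNS elimination can destroy the possibility of a much longer subsequent sequence of NS/CNS eliminations. This choice phenomenon is the non-monotonic ingredient one exploits: performing the ``CNS-first'' action at $x_2$ yields only one elimination, while performing ``NS-first'' yields $d-1$. Chaining several such choices with clause-like side constraints is the natural way to encode a Boolean formula.

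First I would design a \emph{variable gadget} modelled on the above 2-variable example, in which, depending on the order chosen, one can play out either a short elimination sequence (representing the Boolean value \emph{true}) or a long one (representing \emph{false}), but not both, and the two alternatives produce a fixed, distinct number of CNS eliminations. Next I would design a \emph{clause gadget}, attached to the three variable gadgets of the literals of the clause, that exposes one additional CNS elimination exactly when at least one of the incident gadgets has been committed to the satisfying polarity; the conditioning variable of Definition~\ref{def:CNS} would be chosen inside one of the adjacent variable gadgets so that the required property $\forall k \notin \{i,j\}, b \xrightarrow{\scriptstyle{ik}} a$ holds after, and only after, an appropriate choice is made upstream.

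Second, I would verify the reduction. Given a 3-CNF formula $\phi$ with $n$ variables and $m$ clauses, the constructed instance $I_\phi$ should admit an elimination sequence of length at least a target value $k^\ast$ if and only if $\phi$ is satisfiable, where $k^\ast$ is the sum of the per-gadget contributions of the ``better'' choice plus $m$ bonus eliminations from the clause gadgets. The forward direction is straightforward: a satisfying assignment dictates in which order to process each gadget, and the clause bonuses then follow. For the converse one shows that any sequence of length $\geq k^\ast$ must commit every variable gadget in a globally consistent way and unlock all clause bonuses, hence induces a satisfying assignment. The construction being polynomial, this proves NP-hardness for CNS. For SCSS, since SCSS subsumes CNS by Corollary~\ref{cor:SS+CNS}, it suffices either to show that in $I_\phi$ no extra SCSS eliminations exist beyond the CNS ones (adding a light padding if needed so that the SCSS-optimum equals the CNS-optimum), or to repeat the same gadget argument directly at the SCSS level.

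The main obstacle is the gadget engineering. Because CNS is sensitive to the \emph{entire} neighbourhood of a value (one requires $b \xrightarrow{\scriptstyle{ik}} a$ for every $k \notin \{i,j\}$), any additional constraint introduced to connect two gadgets can simultaneously create or destroy CNS-opportunities elsewhere, so the gadgets must be insulated strongly enough to prevent ``cross-talk'' eliminations; this typically means keeping the graph of non-trivial constraints sparse and using highly asymmetric relations so that spurious substitutability cannot arise. For SCSS the extra freedom offered by the replacement values $e$ and $g$ in Definition~\ref{def:SCSS} makes ruling out unintended substitutions even more delicate, and the backward direction of the reduction will require a careful case analysis of the admissible elimination orders, comparable in flavour to the case split in the proof of the main theorem.
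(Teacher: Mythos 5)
Your proposal is a plan rather than a proof: the entire burden of the argument rests on the variable and clause gadgets, and you do not construct them. The claims that carry the reduction --- that a clause gadget ``exposes one additional CNS elimination exactly when at least one of the incident gadgets has been committed to the satisfying polarity'', and that the gadgets can be insulated so that no spurious CNS opportunities arise from the connecting constraints --- are precisely the difficult part, and you acknowledge as much in your final paragraph without resolving it. Because CNS requires $b \xrightarrow{\scriptstyle{ik}} a$ for \emph{every} $k \notin \{i,j\}$, every edge you add to link gadgets changes which substitutions are available inside them, so until the gadgets are exhibited and this cross-talk is ruled out, the reduction does not exist. As written, the argument cannot be checked and the statement is not proved.

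It is also worth noting that the paper's proof takes a far more economical route that sidesteps gadget engineering entirely: it reduces from \textsc{Set Cover}. Given $S_1,\ldots,S_m \subseteq U$, one builds a two-variable instance with $\mathcal{D}(x_1)=\{1,\ldots,m\}$, $\mathcal{D}(x_2)=U$ and $R_{12}=\{(i,u) \mid u \in S_i\}$; then value $i$ is CNS-eliminable from $\mathcal{D}(x_1)$ (conditioned by $x_2$) exactly when the remaining sets still cover $U$, so the maximum number of CNS eliminations from $\mathcal{D}(x_1)$ is $m$ minus the minimum cover size. A triangle of equality constraints on auxiliary variables $x_3,x_4$ blocks any eliminations at $x_2$. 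The non-monotonicity you correctly identified in the $(x_1=x_2)\vee(x_2=0)$ example is thus exploited, but inside a single constraint where the counting is immediate, with no clause gadgets and no insulation argument needed. For SCSS the paper simply observes that on this instance SCSS and CNS coincide, which is cleaner than the padding argument you sketch. If you want to salvage your approach, the honest path is to carry out the gadget constructions in full; but you should be aware that a much shorter proof is available.
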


\begin{proof}
We prove this by givng a polynomial reduction from the set cover problem~\cite{DBLP:conf/coco/Karp72},
the well-known NP-complete problem which, given sets $S_1,\ldots,S_m  \subseteq U$
and an integer $k$, consists in determining whether there are $k$ sets $S_{i_1},\ldots,S_{i_k}$ which
cover $U$  (i.e. such that $S_{i_1} \cup \ldots \cup S_{i_k} = U$). We can assume that  $S_1 \cup \ldots \cup S_m = U$
and $k < m$, otherwise the problem is trivially solvable.
Given sets $S_1,\ldots,S_m  \subseteq U$, we create a 2-variable CSP instance with $\mathcal{D}(x_1) = \{1,\ldots,m\}$,
$\mathcal{D}(x_2) = U$ and $R_{12} = \{ (i,u) \mid u \in S_i \}$. We can eliminate value $i$ from $\mathcal{D}(x_1)$
by CNS (with, of course, $x_2$ as the conditioning variable) if and only if 
$S_1,\ldots,S_{i-1},S_{i+1},\ldots,m$ cover $U$. Indeed, we can continue eliminating elements from
$\mathcal{D}(x_1)$ by CNS provided the sets $S_j$ ($j \in \mathcal{D}(x_1)$) still cover $U$.
Clearly, maximising the number of eliminations from $\mathcal{D}(x_1)$ by CNS is equivalent to
minimising the size of the cover. To prevent any eliminations from the domain of $x_2$ by CNS, we 
add variables $x_3$ and $x_4$ with domains $\{1,\ldots,m\}$, together with the three equality constraints $x_2=x_3$,
$x_3=x_4$ and $x_4=x_2$.
To complete the proof for CNS, it is sufficient to observe that this reduction is polynomial.

It is easily verified that in this instance, CNS and SCSS are equivalent. Hence, this proof also
shows that finding the longest sequence of SCSS value-eliminations is NP-hard.
\end{proof}

In the proof of the following theorem, we need the following notion: 
we say that a sequence of value-eliminations by snake-substitution (SS) is \emph{convergent}
if no more SS value-eliminations are possible after this sequence of eliminations is applied.

\begin{theorem}
Finding a longest sequence of snake-substitution value-eliminations 
is NP-hard.
\end{theorem}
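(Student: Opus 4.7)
The plan is to prove NP-hardness by a polynomial reduction from Set Cover, mirroring the construction used in the previous theorem but adapted to exploit the specific structure of snake-substitutability. Given an instance $(S_1,\ldots,S_m \subseteq U)$ of Set Cover, I would build a binary CSP instance containing a distinguished variable $x_1$ with $\mathcal{D}(x_1) = \{1,\ldots,m\}$ (one value per set) such that the length of a convergent sequence of SS eliminations from $\mathcal{D}(x_1)$ equals $m$ minus the size of the set cover implicitly maintained throughout the sequence. A longest such sequence then corresponds to a minimum set cover, yielding NP-hardness.

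The core of the construction is a constraint $R_{12} = \{(i,u) \mid u \in S_i\}$ between $x_1$ and an ``element'' variable $x_2$ with $\mathcal{D}(x_2) = U$. The subtlety is that, unlike CNS, SS does not allow the substitute $a \in \mathcal{D}(x_1)$ to depend on the value at the conditioning variable; only the snake witness $e(1,2,a,d) \in \mathcal{D}(x_2)$ may depend on $d$. I would therefore add gadget variables $y_j$ (one per set index) whose constraints with $x_2$ are chosen so that the snake relation $d \xrightarrow{\scriptstyle{2\ell}} e$, ranging over the gadget neighbours $\ell$, holds precisely when $e$ represents an element that is covered by some set $S_{j'}$ currently still present in $\mathcal{D}(x_1)$. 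With this gadget in place, the condition $i \overset{12}{\rightsquigarrow} i'$ translates, via the per-element snake flexibility, into ``$\{S_j : j \in \mathcal{D}(x_1) \setminus \{i\}\}$ covers $U$'', rather than the overly strong $S_i \subseteq S_{i'}$ that would arise without any snake flexibility. As in the CNS proof, I would add equality-constrained auxiliary variables $x_3,x_4$ (with copies of $\mathcal{D}(x_2)$) to prevent SS eliminations from occurring in domains other than $\mathcal{D}(x_1)$.

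Correctness breaks into three parts: (i) at each step of the elimination process, a value $i \in \mathcal{D}(x_1)$ is SS-eliminable if and only if the indices remaining in $\mathcal{D}(x_1) \setminus \{i\}$ cover $U$; (ii) no domain other than $\mathcal{D}(x_1)$ ever becomes eligible for SS; (iii) the whole construction is polynomial in $m$ and $|U|$. The main obstacle will be the gadget design for (i): the constraints between $x_2$ and the $y_j$ must simultaneously forbid the snake witness $e$ from landing on elements that are no longer covered by any remaining set, yet permit $e$ to land on any element that \emph{is} still covered, while the gadget variables themselves must remain inert under SS throughout the elimination process. A secondary technical point is the dynamic consistency of this correspondence: after each SS elimination at $x_1$, the pool of admissible snake witnesses for subsequent substitutions must shrink in lockstep with $\mathcal{D}(x_1)$, so that convergent sequences of SS eliminations are in bijection with sequences of set removals that preserve coverage, and longest such sequences correspond to minimum covers.
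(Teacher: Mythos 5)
Your reduction from Set Cover does not go through, and the obstacle is precisely the one you flag and then try to engineer around. For SS, eliminating $i$ from $\mathcal{D}(x_1)$ requires a \emph{single} substitute $i'$ such that $i \overset{12}{\rightsquigarrow} i'$, and every snake witness $e=e(1,2,i',d)$ must satisfy $(i',e)\in R_{12}$, i.e.\ $e\in S_{i'}$. So whatever static relation your gadgets induce between $d$ and $e$, eliminability of $i$ has the form ``$\exists i'\,\forall d\in S_i\,\exists e\in S_{i'}$ with $d\to e$'', which depends on one remaining set $S_{i'}$, whereas coverage of $U$ by the remaining sets depends on their union. If $S_i=\{u,v\}$ with $u$ covered only by $S_{j_1}$ and $v$ only by $S_{j_2}$ among the remaining sets, the cover survives the removal of $S_i$ yet no single $i'$ can supply both witnesses (unless $\to$ is made so permissive that soundness fails on other instances). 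Moreover, the semantics you assign to the gadgets --- that $d\xrightarrow{2\ell}e$ holds over the gadget neighbours iff $e$ is covered by a set still present in $\mathcal{D}(x_1)$ --- is (a) a condition on $e$ alone, under which any $e\in S_{i'}$ qualifies automatically and the test becomes vacuous, and (b) impossible to maintain dynamically: $\ell$ ranges over variables other than $x_1$ and $x_2$, and since you require the gadget domains to stay inert, $d\xrightarrow{2\ell}e$ is a fixed relation that cannot track the shrinking of $\mathcal{D}(x_1)$. Finally, the equality triangle on $x_2,x_3,x_4$ imported from the CNS proof forces $d\xrightarrow{23}e$ to hold only for $e=d$, which collapses $i\overset{12}{\rightsquigarrow}i'$ to $S_i\subseteq S_{i'}$, i.e.\ plain neighbourhood substitution; that auxiliary device destroys exactly the snake flexibility your argument needs.

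The paper avoids Set Cover entirely for SS and reduces instead from \textsc{Max 2-Sat}: chains of variables linked by $\geq$ constraints along which elimination of a middle value propagates (simulating the truth value of a literal), a negation gadget making the two propagations mutually exclusive, a fan-out gadget to copy literals, and a clause gadget in which the value $2$ disappears from all its variables iff at least one literal is true; the length of a convergent elimination sequence is then an affine function of the number of satisfied clauses. Some reduction of that flavour --- where the objective counts eliminations spread across many variables rather than from a single distinguished domain --- seems unavoidable here, because a single SS step cannot express a disjunction over which remaining set covers a given element.
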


\begin{proof}
It suffices to demonstrate a polynomial reduction from the problem {\sc Max 2-Sat} which is known to
be NP-hard~\cite{DBLP:journals/tcs/GareyJS76}. 
Consider an instance $I_{2SAT}$ of {\sc Max 2-Sat} with variables $X_1,\ldots,X_N$ and $M$ binary clauses:
the goal is to find a truth assignment to these variables which maximises the number of satisfied clauses.
We will construct a binary CSP instance $I_{CSP}$ on $O(N+M)$ variables, each with domain of size
at most four, such that the convergent sequences $S$ of SS value-eliminations in $I_{CSP}$ correspond
to truth assignments to $X_1,\ldots,X_N$ and the length of $S$ is $\alpha N + \beta m$
where $\alpha, \beta$ are constants and $m$ is the number of clauses of $I_{2SAT}$ satisfied by the
corresponding truth assignment.

We require four constructions (which we explain in detail below):
\begin{enumerate}
\item the construction in Figure~\ref{fig:snake-propagation} simulates a {\sc Max 2-Sat} literal $X$ by a
path of CSP variables joined by greater-than-or-equal-to constraints.
\item the construction in Figure~\ref{fig:snake-negation} simulates the relationship between a {\sc Max 2-Sat}
variable $X$ and its negation $\overline{X}$.
\item the construction in Figure~\ref{fig:snake-copies} allows us to create multiple copies of a {\sc Max 2-Sat} literal $X$.
\item the construction in Figure~\ref{fig:snake-clause} simulates a binary clause $X \vee Y$ where
$X,Y$ are {\sc Max 2-Sat} literals.
\end{enumerate}

In each of these figures, each oval represents a CSP variable with the bullets inside the oval representing the 
possible values for this variable. If there is a non-trivial constraint between two variables $x_i,x_j$ this is represented
by joining up with a line those pairs of values $a,b$ such that $(a,b) \in R_{ij}$. Where the constraint has a 
compact form, such as $x_1 \geq x_2$ this is written next to the constraint. 
In the following, we write $b \overset{x_i}{\rightsquigarrow} a$ if 
$b \in \mathcal{D}(x_i)$ is snake substitutable by $a \in \mathcal{D}(x_i)$.
Our constructions are such that the only value that can be eliminated from any domain by SS is the value $2$.

\thicklines \setlength{\unitlength}{1.6pt}
\newsavebox{\varthreeb}
\savebox{\varthreeb}(20,40){
\begin{picture}(20,40)(0,0)
\put(10,20){\oval(18,36)} 
\put(10,10){\makebox(0,0){{$\bullet$}}}
\put(10,20){\makebox(0,0){{$\bullet$}}} 
\put(10,30){\makebox(0,0){{$\bullet$}}}
\end{picture}
}
\thicklines \setlength{\unitlength}{1.3pt}
\newsavebox{\varthree}
\savebox{\varthree}(20,40){
\begin{picture}(20,40)(0,0)
\put(10,20){\oval(18,38)} 
\put(10,10){\makebox(0,0){{$\bullet$}}}
\put(10,20){\makebox(0,0){{$\bullet$}}} 
\put(10,30){\makebox(0,0){{$\bullet$}}}
\end{picture}
}
\newsavebox{\vartwo}
\savebox{\vartwo}(20,40){
\begin{picture}(20,40)(0,0)
\put(10,20){\oval(18,38)} 
\put(10,10){\makebox(0,0){{$\bullet$}}}
\put(10,30){\makebox(0,0){{$\bullet$}}}
\end{picture}
}
\newsavebox{\varfour}
\savebox{\varfour}(20,50){
\begin{picture}(20,50)(0,0)
\put(10,25){\oval(18,48)} 
\put(10,10){\makebox(0,0){{$\bullet$}}}
\put(10,20){\makebox(0,0){{$\bullet$}}} 
\put(10,30){\makebox(0,0){{$\bullet$}}}
\put(10,40){\makebox(0,0){{$\bullet$}}}
\end{picture}
}

\thicklines
\setlength{\unitlength}{1.3pt}
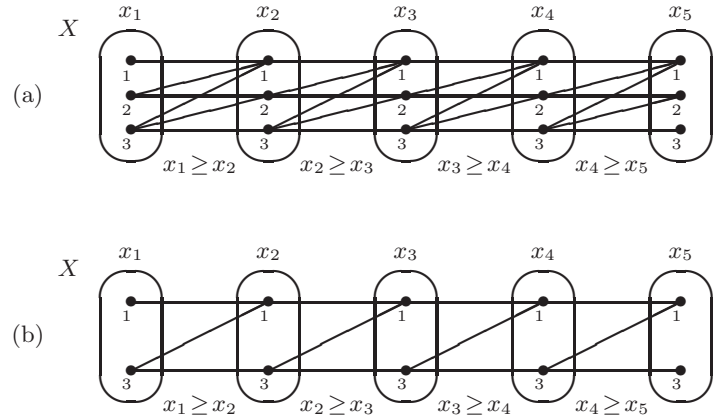
\begin{figure}
\centering
\begin{picture}(193,125)(0,-5)
\put(0,70){
\begin{picture}(193,50)(-13,0)
\put(-20,20){\makebox(0,0){{(a)}}}
\put(0,0){\usebox{\varthree}} \put(40,0){\usebox{\varthree}} 
\put(80,0){\usebox{\varthree}} \put(120,0){\usebox{\varthree}}  \put(160,0){\usebox{\varthree}} 
\put(10,10){\line(1,0){160}} \put(10,20){\line(1,0){160}} \put(10,30){\line(1,0){160}}
\put(10,10){\line(4,1){80}} \put(50,10){\line(4,1){80}} \put(90,10){\line(4,1){80}}
\put(10,10){\line(2,1){40}} \put(50,10){\line(2,1){40}} \put(90,10){\line(2,1){40}} \put(130,10){\line(2,1){40}}
\put(10,20){\line(4,1){40}} \put(130,10){\line(4,1){40}}
\put(9,26){\makebox(0,0){{$_1$}}} \put(9,16){\makebox(0,0){{$_2$}}} \put(9,6){\makebox(0,0){{$_3$}}}
\put(49,26){\makebox(0,0){{$_1$}}} \put(49,16){\makebox(0,0){{$_2$}}} \put(49,6){\makebox(0,0){{$_3$}}}
\put(89,26){\makebox(0,0){{$_1$}}} \put(89,16){\makebox(0,0){{$_2$}}} \put(89,6){\makebox(0,0){{$_3$}}}
\put(129,26){\makebox(0,0){{$_1$}}} \put(129,16){\makebox(0,0){{$_2$}}} \put(129,6){\makebox(0,0){{$_3$}}}
\put(169,26){\makebox(0,0){{$_1$}}} \put(169,16){\makebox(0,0){{$_2$}}} \put(169,6){\makebox(0,0){{$_3$}}}
\put(10,44){\makebox(0,0){{$x_1$}}}  \put(50,44){\makebox(0,0){{$x_2$}}}
\put(90,44){\makebox(0,0){{$x_3$}}}  \put(130,44){\makebox(0,0){{$x_4$}}} \put(170,44){\makebox(0,0){{$x_5$}}}
\put(30,0){\makebox(0,0){{$x_1 \! \geq \! x_2$}}} \put(70,0){\makebox(0,0){{$x_2 \! \geq \! x_3$}}}
\put(110,0){\makebox(0,0){{$x_3 \! \geq \! x_4$}}} \put(150,0){\makebox(0,0){{$x_4 \! \geq \! x_5$}}}
\put(-8,40){\makebox(0,0){{$X$}}}
\end{picture}}
\put(0,0){
\begin{picture}(193,50)(-13,0)
\put(-20,20){\makebox(0,0){{(b)}}}
\put(0,0){\usebox{\vartwo}} \put(40,0){\usebox{\vartwo}} 
\put(80,0){\usebox{\vartwo}} \put(120,0){\usebox{\vartwo}}  \put(160,0){\usebox{\vartwo}} 
\put(10,10){\line(1,0){160}}  \put(10,30){\line(1,0){160}}
\put(10,10){\line(2,1){40}} \put(50,10){\line(2,1){40}} \put(90,10){\line(2,1){40}} \put(130,10){\line(2,1){40}}
\put(9,26){\makebox(0,0){{$_1$}}}  \put(9,6){\makebox(0,0){{$_3$}}}
\put(49,26){\makebox(0,0){{$_1$}}}  \put(49,6){\makebox(0,0){{$_3$}}}
\put(89,26){\makebox(0,0){{$_1$}}}  \put(89,6){\makebox(0,0){{$_3$}}}
\put(129,26){\makebox(0,0){{$_1$}}}  \put(129,6){\makebox(0,0){{$_3$}}}
\put(169,26){\makebox(0,0){{$_1$}}}  \put(169,6){\makebox(0,0){{$_3$}}}
\put(10,44){\makebox(0,0){{$x_1$}}}  \put(50,44){\makebox(0,0){{$x_2$}}}
\put(90,44){\makebox(0,0){{$x_3$}}}  \put(130,44){\makebox(0,0){{$x_4$}}} \put(170,44){\makebox(0,0){{$x_5$}}}
\put(30,0){\makebox(0,0){{$x_1 \! \geq \! x_2$}}} \put(70,0){\makebox(0,0){{$x_2 \! \geq \! x_3$}}}
\put(110,0){\makebox(0,0){{$x_3 \! \geq \! x_4$}}} \put(150,0){\makebox(0,0){{$x_4 \! \geq \! x_5$}}}
\put(-8,40){\makebox(0,0){{$X$}}}
\end{picture}}
\end{picture}
\caption{A construction to simulate a {\sc Max 2-Sat} variable $X$: (a) $X=0$, (b) $X=1$.} \label{fig:snake-propagation}
\end{figure}

Figure~\ref{fig:snake-propagation}(a) shows a path of CSP variables constrained by greater-than-or-equal-to
constraints. The end variables $x_1$ and $x_5$ are constrained by other variables that, for clarity
of presentation, are not shown in this figure. If value $2$ is eliminated from $\mathcal{D}(x_1)$, then
we have $2 \overset{x_2}{\rightsquigarrow} 3$. In fact, $2$ is neighbourhood substitutable by $3$.
Once the value $2$ is eliminated from $\mathcal{D}(x_2)$,
we have $2 \overset{x_3}{\rightsquigarrow} 3$. Indeed, eliminations of the value $2$ propagate so that
in the end we have the situation shown in Figure~\ref{fig:snake-propagation}(b). 
By a symmetrical argument, the elimination of the value $2$ from $\mathcal{D}(x_5)$
propagates from right to left (this time by neighbourhood substitution by $1$) to again produce the situation 
shown in Figure~\ref{fig:snake-propagation}(b). It is easily verified that,
without any eliminations from the domains $\mathcal{D}(x_1)$ or $\mathcal{D}(x_5)$, no values for the
variables $x_2,x_3,x_4$ are snake-substitutable. Furthermore, the values $1$ and $3$ for the 
variables $x_2,x_3,x_4$ are not snake-substitutable even after the elimination of the value $2$
from all domains. So we either have no eliminations, which we associate with the truth assignment $X=0$
(where $X$ is the {\sc Max 2-Sat} literal corresponding to this path of variables in $I_{CSP}$)
or the value $2$ is eliminated from all domains, which we associate with the truth assignment $X=1$.

\thicklines
\setlength{\unitlength}{1.3pt}
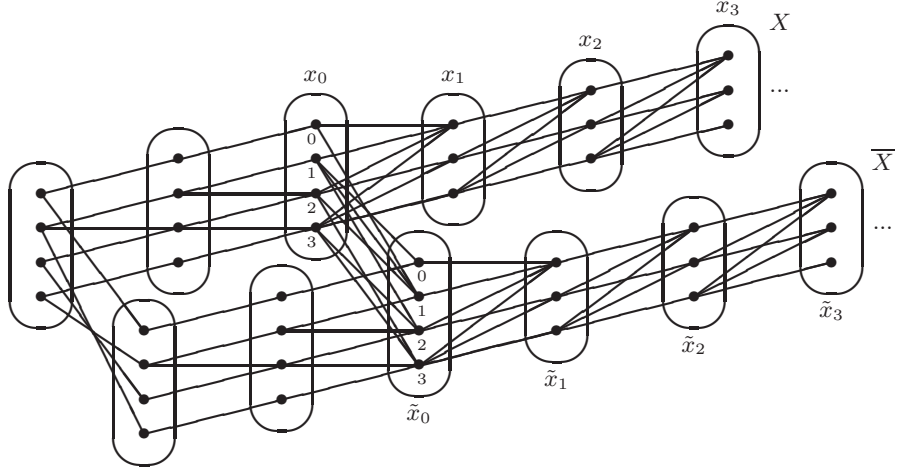
\begin{figure}
\centering
\begin{picture}(260,150)(0,0)
\put(0,50){\usebox{\varfour}} \put(40,60){\usebox{\varfour}} \put(80,70){\usebox{\varfour}}
\put(120,80){\usebox{\varthree}} \put(160,90){\usebox{\varthree}} \put(200,100){\usebox{\varthree}}
\put(30,10){\usebox{\varfour}} \put(70,20){\usebox{\varfour}} \put(110,30){\usebox{\varfour}}
\put(150,40){\usebox{\varthree}} \put(190,50){\usebox{\varthree}} \put(230,60){\usebox{\varthree}}
\put(10,60){\line(4,1){200}} \put(10,70){\line(4,1){200}} \put(10,80){\line(4,1){200}} \put(10,90){\line(4,1){80}}
\put(40,20){\line(4,1){200}} \put(40,30){\line(4,1){200}} \put(40,40){\line(4,1){200}} \put(40,50){\line(4,1){80}}
\put(10,80){\line(1,0){80}} \put(50,90){\line(1,0){40}} 
\put(40,40){\line(1,0){80}} \put(80,50){\line(1,0){40}}
\put(90,80){\line(4,1){40}} \put(90,80){\line(2,1){40}} \put(90,80){\line(4,3){40}} \put(90,90){\line(2,1){40}} \put(90,110){\line(1,0){40}}
\put(120,40){\line(4,1){40}} \put(120,40){\line(2,1){40}} \put(120,40){\line(4,3){40}} \put(120,50){\line(2,1){40}} \put(120,70){\line(1,0){40}}
\put(10,60){\line(3,-2){30}} \put(10,70){\line(3,-4){30}} \put(10,80){\line(1,-2){30}} \put(10,90){\line(3,-4){30}}
\put(90,80){\line(3,-4){30}} \put(90,80){\line(1,-1){30}} \put(90,90){\line(3,-5){30}} \put(90,90){\line(1,-1){30}}
\put(90,100){\line(3,-5){30}} \put(90,100){\line(3,-4){30}} \put(90,100){\line(1,-1){30}} \put(90,110){\line(3,-5){30}}
\put(130,90){\line(2,1){80}} \put(130,90){\line(4,3){40}} \put(130,100){\line(2,1){40}} 
\put(170,100){\line(2,1){40}} \put(170,100){\line(4,3){40}}
\put(160,50){\line(2,1){80}} \put(160,50){\line(4,3){40}} \put(160,60){\line(2,1){40}} 
\put(200,60){\line(2,1){40}} \put(200,60){\line(4,3){40}}
\put(225,120){\makebox(0,0){{$...$}}} \put(255,80){\makebox(0,0){{$...$}}}
\put(225,140){\makebox(0,0){{$X$}}} \put(255,100){\makebox(0,0){{$\overline{X}$}}}
\put(90,124){\makebox(0,0){{$x_0$}}} \put(130,124){\makebox(0,0){{$x_1$}}}
\put(170,134){\makebox(0,0){{$x_2$}}} \put(210,144){\makebox(0,0){{$x_3$}}}
\put(120,26){\makebox(0,0){{$\tilde{x}_0$}}} \put(160,36){\makebox(0,0){{$\tilde{x}_1$}}}
\put(200,46){\makebox(0,0){{$\tilde{x}_2$}}} \put(240,56){\makebox(0,0){{$\tilde{x}_3$}}}
\put(89,106){\makebox(0,0){{$_0$}}} \put(89,96){\makebox(0,0){{$_1$}}} 
\put(89,86){\makebox(0,0){{$_2$}}} \put(89,76){\makebox(0,0){{$_3$}}}
\put(121,66){\makebox(0,0){{$_0$}}} \put(121,56){\makebox(0,0){{$_1$}}} 
\put(121,46){\makebox(0,0){{$_2$}}} \put(121,36){\makebox(0,0){{$_3$}}}
\end{picture}
\caption{A construction to simulate a {\sc Max 2-Sat} variable $X$ and its negation $\overline{X}$.} \label{fig:snake-negation}
\end{figure}

The construction in Figure~\ref{fig:snake-negation} joins the two path-of-CSP-variables constructions corresponding to
the literals $X$ and $\overline{X}$. This construction ensures that exactly one of $X$ and $\overline{X}$ are assigned the value $1$.
It is easy (if tedious) to verify that the only snake substitutions that are possible in this construction
are $2 \overset{x_0}{\rightsquigarrow} 3$ and $2 \overset{\tilde{x}_0}{\rightsquigarrow} 3$, but that
after elimination of the value $2$ from either of $\mathcal{D}(x_0)$ or $\mathcal{D}(\tilde{x}_0)$,
the other snake substitution is no longer valid. Once, for example, $2$ has been eliminated from 
$\mathcal{D}(x_0)$, then this elimination propagates along the path of CSP variables ($x_1, x_2, x_3, \ldots$) corresponding
to $X$, as shown in Figure~\ref{fig:snake-propagation}(b). By a symmetrical argument, if $2$ is eliminated from 
$\mathcal{D}(\tilde{x}_0)$, then this elimination propagates along the path of CSP variables 
($\tilde{x}_1,\tilde{ x}_2,\tilde{x}_3, \ldots$) corresponding
to $\overline{X}$. Thus, this construction simulates the
assignment of a truth value to $X$ and its complement to $\overline{X}$.

\thicklines
\setlength{\unitlength}{1.6pt}
\begin{figure}
\centering
\begin{picture}(190,140)(0,-10)
\put(0,10){\usebox{\varthreeb}} \put(40,10){\usebox{\varthreeb}} 
\put(120,10){\usebox{\varthreeb}} \put(160,10){\usebox{\varthreeb}} 
\put(80,30){\usebox{\varthreeb}}  \put(40,50){\usebox{\varthreeb}} 
\put(120,50){\usebox{\varthreeb}} \put(160,50){\usebox{\varthreeb}}
\put(10,80){\usebox{\varthreeb}} \put(70,80){\usebox{\varthreeb}}
\put(10,20){\line(1,0){40}} \put(10,30){\line(1,0){40}} \put(10,40){\line(1,0){40}} 
\put(10,20){\line(4,1){40}} \put(10,20){\line(2,1){40}} \put(10,30){\line(4,1){40}} 
\put(130,20){\line(1,0){40}} \put(130,30){\line(1,0){40}} \put(130,40){\line(1,0){40}} 
\put(130,20){\line(4,1){40}} \put(130,20){\line(2,1){40}} \put(130,30){\line(4,1){40}} 
\put(130,60){\line(1,0){40}} \put(130,70){\line(1,0){40}} \put(130,80){\line(1,0){40}} 
\put(130,60){\line(4,1){40}} \put(130,60){\line(2,1){40}} \put(130,70){\line(4,1){40}} 
\put(50,20){\line(2,1){40}} \put(50,30){\line(2,1){40}} \put(50,40){\line(2,1){40}} 
\put(50,20){\line(4,3){40}} \put(50,20){\line(1,1){40}} \put(50,30){\line(4,3){40}}  
\put(90,40){\line(2,1){40}} \put(90,50){\line(2,1){40}} \put(90,60){\line(2,1){40}} 
\put(90,40){\line(4,3){40}} \put(90,40){\line(1,1){40}} \put(90,50){\line(4,3){40}} 
\put(50,60){\line(2,-1){40}} \put(50,70){\line(2,-1){40}} \put(50,80){\line(2,-1){40}} 
\put(50,70){\line(4,-3){40}} \put(50,80){\line(4,-3){40}} \put(50,80){\line(1,-1){40}} 
\put(90,40){\line(2,-1){40}} \put(90,50){\line(2,-1){40}} \put(90,60){\line(2,-1){40}} 
\put(90,40){\line(4,-1){40}} \put(90,40){\line(1,0){40}} \put(90,50){\line(4,-1){40}} 
\put(20,90){\line(1,0){60}} \put(20,100){\line(1,0){60}} \put(20,110){\line(1,0){60}} 
\put(20,90){\line(1,-1){30}} \put(20,100){\line(1,-1){30}} \put(20,110){\line(1,-1){30}}  
\put(50,60){\line(1,1){30}} \put(50,70){\line(1,1){30}} \put(50,80){\line(1,1){30}} 
\put(10,36){\makebox(0,0){{$_1$}}} \put(10,26){\makebox(0,0){{$_2$}}} \put(10,16){\makebox(0,0){{$_3$}}}
\put(50,36){\makebox(0,0){{$_1$}}} \put(50,26){\makebox(0,0){{$_2$}}} \put(50,16){\makebox(0,0){{$_3$}}}
\put(90,56){\makebox(0,0){{$_1$}}} \put(90,46){\makebox(0,0){{$_2$}}} \put(90,36){\makebox(0,0){{$_3$}}}
\put(130,36){\makebox(0,0){{$_1$}}} \put(130,26){\makebox(0,0){{$_2$}}} \put(130,16){\makebox(0,0){{$_3$}}}
\put(170,36){\makebox(0,0){{$_1$}}} \put(170,26){\makebox(0,0){{$_2$}}} \put(170,16){\makebox(0,0){{$_3$}}} 
\put(130,76){\makebox(0,0){{$_1$}}} \put(130,66){\makebox(0,0){{$_2$}}} \put(130,56){\makebox(0,0){{$_3$}}}
\put(170,76){\makebox(0,0){{$_1$}}} \put(170,66){\makebox(0,0){{$_2$}}} \put(170,56){\makebox(0,0){{$_3$}}}
\put(-5,30){\makebox(0,0){{$...$}}} \put(-5,45){\makebox(0,0){{$X$}}}
\put(185,70){\makebox(0,0){{$...$}}} \put(185,85){\makebox(0,0){{$X'$}}}
\put(186,30){\makebox(0,0){{$...$}}} \put(185,45){\makebox(0,0){{$X''$}}}
\put(10,6){\makebox(0,0){{$x_1$}}} \put(50,6){\makebox(0,0){{$x_2$}}} 
\put(90,26){\makebox(0,0){{$x_3$}}} 
\put(130,6){\makebox(0,0){{$x_4''$}}} \put(170,6){\makebox(0,0){{$x_5''$}}} 
\put(130,94){\makebox(0,0){{$x_4'$}}} \put(170,94){\makebox(0,0){{$x_5'$}}} 
\put(20,124){\makebox(0,0){{$u$}}} \put(80,124){\makebox(0,0){{$v$}}} \put(36,60){\makebox(0,0){{$w$}}} 
\put(30,12){\makebox(0,0){{$_{x_1 \geq x_2}$}}}
\put(150,12){\makebox(0,0){{$_{x_4'' \geq x_5''}$}}} \put(150,86){\makebox(0,0){{$_{x_4' \geq x_5'}$}}}
\put(73,23){\makebox(0,0){{$_{x_2 \geq x_3}$}}} \put(106,23){\makebox(0,0){{$_{x_3 \geq x_4''}$}}}
\put(79,72){\makebox(0,0){{$_{w \leq x_3}$}}} \put(106,76){\makebox(0,0){{$_{x_3 \geq x_4'}$}}}
\put(27,73){\makebox(0,0){{$_{u=w}$}}} \put(50,114){\makebox(0,0){{$_{u=v}$}}} \put(78,78){\makebox(0,0){{$_{w=v}$}}}

\end{picture}
\caption{A construction to create two copies $X'$ and $X''$ of the {\sc Max 2-Sat}
variable $X$.} \label{fig:snake-copies}
\end{figure}
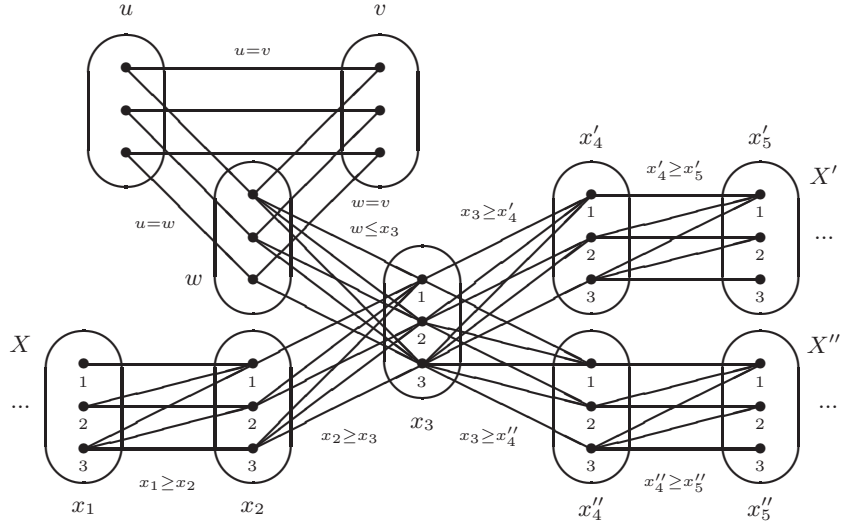

Since any literal of $I_{2SAT}$ may occur in several clauses, we need to be able to make copies of
any literal. Figure~\ref{fig:snake-copies} shows a construction that creates two copies $X',X''$
of a literal $X$. This construction can easily be generalised to make $k$ copies of a literal, if required,
by having $k$ identical paths of greater-than-equal-to constraints on the right of the figure
all starting at the pivot variable $x_3$.
Before any eliminations are performed, no snake substitutions are possible in this construction.
However, once the value $2$ has been eliminated from $\mathcal{D}(x_1)$, eliminations
propagate, as in Figure~\ref{fig:snake-propagation}: the value $2$ can successively be eliminated
from the domains of variables $x_2$, $x_3$, $x_4'$, $x_5'$, and $x_4''$, $x_5''$. 
Each elimination is in fact by neighbourhood substitution, as in Figure~\ref{fig:snake-propagation}. 
These eliminations mean that we effectively have two copies $X'$, $X''$ of the literal $X$.
The triangle of equality constraints at the top left of this construction is is there simply to prevent
propagation in the reverse direction: even if the value $2$ is eliminated from the domains of $x_5',x_4'$
and $x_5'',x_4''$ by the propagation of eliminations from the right, this cannot provoke the elimination of the
value $2$ from the domain of the pivot variable $x_3$.

\thicklines
\setlength{\unitlength}{1.3pt}
\begin{figure}
\centering
\begin{picture}(130,150)(0,0)
\put(10,10){\usebox{\varthree}} \put(50,10){\usebox{\varthree}} 
\put(10,90){\usebox{\varthree}} \put(50,90){\usebox{\varthree}} 
\put(100,70){\oval(38,18)} 
\put(90,70){\makebox(0,0){{$\bullet$}}}
\put(100,70){\makebox(0,0){{$\bullet$}}} 
\put(110,70){\makebox(0,0){{$\bullet$}}}
\put(20,20){\line(1,0){40}} \put(20,20){\line(4,1){40}} \put(20,20){\line(2,1){40}} 
\put(20,30){\line(1,0){40}} \put(20,30){\line(4,1){40}} \put(20,40){\line(1,0){40}} 
\put(19,36){\makebox(0,0){{$_1$}}} \put(19,26){\makebox(0,0){{$_2$}}} \put(19,16){\makebox(0,0){{$_3$}}}
\put(59,36){\makebox(0,0){{$_1$}}} \put(59,26){\makebox(0,0){{$_2$}}} \put(59,16){\makebox(0,0){{$_3$}}}
\put(20,100){\line(1,0){40}} \put(20,100){\line(4,1){40}} \put(20,100){\line(2,1){40}} 
\put(20,110){\line(1,0){40}} \put(20,110){\line(4,1){40}} \put(20,120){\line(1,0){40}} 
\put(19,116){\makebox(0,0){{$_1$}}} \put(19,106){\makebox(0,0){{$_2$}}} \put(19,96){\makebox(0,0){{$_3$}}}
\put(59,116){\makebox(0,0){{$_1$}}} \put(59,106){\makebox(0,0){{$_2$}}} \put(59,96){\makebox(0,0){{$_3$}}}
\put(5,30){\makebox(0,0){{$...$}}} \put(5,45){\makebox(0,0){{$Y$}}}
\put(5,110){\makebox(0,0){{$...$}}} \put(5,125){\makebox(0,0){{$X$}}}
\put(60,20){\line(1,1){50}} \put(60,20){\line(4,5){40}} \put(60,20){\line(3,5){30}} 
\put(60,30){\line(1,1){40}} \put(60,30){\line(3,4){30}} \put(60,40){\line(1,1){30}} 
\put(60,100){\line(5,-3){50}} \put(60,100){\line(4,-3){40}} \put(60,100){\line(1,-1){30}} 
\put(60,110){\line(5,-4){50}} \put(60,110){\line(1,-1){40}} \put(60,120){\line(1,-1){50}} 
\put(91,66){\makebox(0,0){{$_1$}}} \put(101,66){\makebox(0,0){{$_2$}}} \put(111,66){\makebox(0,0){{$_3$}}}
\put(20,134){\makebox(0,0){{$x_1$}}} \put(60,134){\makebox(0,0){{$x_2$}}}
\put(20,54){\makebox(0,0){{$y_1$}}} \put(60,54){\makebox(0,0){{$y_2$}}}
\put(124,70){\makebox(0,0){{$z$}}}
\put(40,90){\makebox(0,0){{$x_1 \! \geq \! x_2$}}} \put(40,10){\makebox(0,0){{$y_1 \! \geq \! y_2$}}}
\put(102,100){\makebox(0,0){{$x_2 \! \geq \! 4 \! - \! z$}}} \put(95,40){\makebox(0,0){{$y_2 \! \geq \! z$}}}

\end{picture}
\caption{A construction to simulate a {\sc Max 2-Sat} clause $X \vee Y$.} \label{fig:snake-clause}
\end{figure}
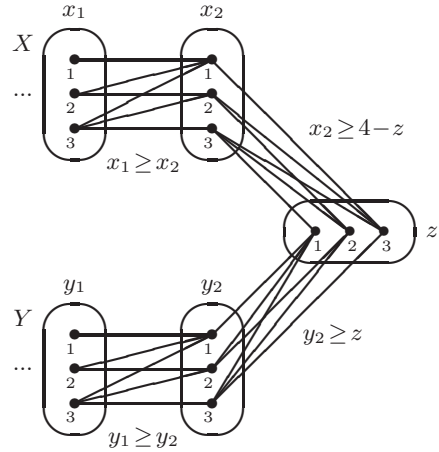

Finally, the construction of Figure~\ref{fig:snake-clause} simulates the clause $X \vee Y$. In fact, 
this construction simply joins together the paths of CSP-variables corresponding to the two literals $X,Y$, 
via a variable $z$. It is easily verified that the elimination of the value $2$ from the domain of $x_1$
allows the propagation of eliminations of the value $2$ from the domains of $x_2$, $z$, $y_2$, $y_1$
in exactly the same way as the propagation of eliminations in Figure~\ref{fig:snake-propagation}.
Similarly, the elimination of the value $2$ from the domain of $y_1$ propagates to all other variables
in the opposite order $y_2$, $z$, $x_2$, $x_1$. Thus, if one or other of the literals $X$ or $Y$ in the clause
is assigned $1$, then the value $2$ is eliminated from all domains of this construction. Eliminations can 
propagate back up to the pivot variable ($x_3$ in Figure~\ref{fig:snake-copies}) but no further, as explained
in the previous paragraph.

Putting all this together, we can see that there is a one-to-one correspondence between convergent sequences 
of SS value-eliminations and  
truth assignments to the variables of the {\sc Max 2-Sat} instance. Furthermore, the number of SS value-eliminations
is maximised when this truth assignment maximises the number of satisfied clauses, since it is $\alpha N + \beta m$
where $\alpha$ is the number of CSP-variables in each path of greater-than-or-equal-to constraints corresponding
to a literal, $\beta$ is the number of CSP-variables in each clause construction and $m$ is the number of satisfied
clauses. This reduction is clearly polynomial.
\end{proof}



\section{Discussion and Conclusion}

We have given two different value-elimination rules, namely snake substitutability (SS) and 
conditioned neighbourhood substitutability (CNS), which strictly subsume neighbourhood
substitution but nevertheless can be applied in the same $O(ed^3)$ time complexity. 
We have also given a more general
notion of substitution (SCSS) subsuming both these rules that can be detected in $O(ed^3)$ time.
The examples in Figure~\ref{fig:examples}  show that these three rules are strictly stronger than
neighborhood substitution and that SS and CNS are incomparable. 
We found snake substitution to be particularly effective when applied
to the problem of labelling line-drawings of polyhedral scenes.

Further research is required to investigate generalisations of SS, CNS or SCSS to non-binary or even global constraints.
Another obvious avenue of research is the generalisation to valued CSPs (also known as cost-function networks).
It is known that the generalisation of neighbourhood substitution to binary 
valued CSPs~\cite{Goldstein,DBLP:conf/cp/LecoutreRD12} can be applied to convergence 
in $O(ed^4)$ time if the aggregation operator is strictly monotonic or idempotent~\cite{DBLP:journals/fss/Cooper03}.
The notion of snake substitutability has already been generalised to binary valued CSPs
and it has been shown that it is possible to test this notion
in $O(ed^4)$ time if the aggregation operator is addition over the 
non-negative rationals (which is a particular example of a strictly monotonic operator)~\cite{DBLP:conf/cp/CooperJC18}. 
However, further research is required to determine the complexity of applying this operation until convergence.

It is known that it is possible to efficiently find all (or a given number of) 
solutions to a CSP after applying neighbourhood substitution:
given the set of all solutions to the reduced instance, it is possible to reconstruct
$K \geq 1$ solutions to the original instance $I$ (or to determine that $I$ does not have $K$ solutions) in
$O(K(de+n^2))$ time~\cite{ns}. This also holds for the case of conditioned neighbourhood substitution, since,
as for neighbourhood substitution, for each solution $s$ found and for each value $b$ eliminated from
some domain $\mathcal{D}(x_i)$, it suffices to test each putative solution obtained
by replacing $s_i$ by $b$. Unfortunately, the extra strength of snake substitution (SS) is here
a drawback, since, by exactly the same argument as for the $\exists2snake$ value-elimination rule
(which is a weaker version of SS)~\cite{jcss}, we can deduce that determining whether a binary CSP
instance has two or more solutions is NP-hard, even given the set of solutions to the reduced instance
after applying SS.

This work begs the interesting theoretical question as to the existence of reduction operations which strengthen 
other known reduction operations without increasing complexity.

\bibliographystyle{splncs04}
\bibliography{biblioStrengthening}

\end{document}